\documentclass[12pt]{article}
\usepackage{arxiv,times}
\usepackage[utf8]{inputenc}
\usepackage[T1]{fontenc}
\usepackage{natbib}
\setcitestyle{authoryear,round,citesep={;},aysep={,},yysep={;}}

\usepackage{amsmath,amsfonts,bm}

\def\eqref#1{equation~\ref{#1}}

\def\1{\bm{1}}

\def\eps{{\epsilon}}

\def\rve{{\mathbf{e}}}

\def\rvu{{\mathbf{i}}}

\def\rvu{{\mathbf{u}}}
\def\rvv{{\mathbf{v}}}
\def\rvw{{\mathbf{w}}}

\DeclareMathAlphabet{\mathsfit}{\encodingdefault}{\sfdefault}{m}{sl}
\SetMathAlphabet{\mathsfit}{bold}{\encodingdefault}{\sfdefault}{bx}{n}

\usepackage{hyperref}
\usepackage{url}
\usepackage{amsmath,amssymb,amsthm,mathtools,bm}
\usepackage{graphicx,booktabs,microtype}
\usepackage{xcolor}
\usepackage[nameinlink,capitalize,noabbrev]{cleveref}

\newtheorem{theorem}{Theorem}
\newtheorem{lemma}{Lemma}
\newtheorem{proposition}{Proposition}

\title{
  \fontsize{17pt}{20pt}\selectfont
  Probability‑Signature Dynamics: \\
  Unpacking Modular Addition Learning Within Two‑Layer Networks
}

\author{
  \parbox{0.95\textwidth}{
    \centering
    \normalfont
    \textbf{Yunji Wang}$^{1,\dagger}$,\quad
    \textbf{Junjie Yao}$^{1,\dagger}$,\quad
    \textbf{Linyu Liu}$^{3}$,\quad
    \textbf{Pinyan Lu}$^{4}$,\quad
    \textbf{Zhi-Qin John Xu}$^{1,2,*}$
    \\[0.8em]
    $^{1}$School of Mathematical Sciences,
    Shanghai Jiao Tong University\\[0.3em]
    $^{2}$Institute of Natural Sciences, MOE-LSC,
    Shanghai Jiao Tong University\\[0.3em]
    $^{3}$Huawei Technologies Ltd\\[0.3em]
    $^{4}$Shanghai University of Finance and Economics\\[0.3em]
    \small{
    $^{\dagger}$These authors contributed equally to this work.\\[0.2em]
    $^{*}$Corresponding author: \texttt{xuzhiqin@sjtu.edu.cn}}
  }
}

\date{}
\renewcommand{\undertitle}{}
\renewcommand{\shorttitle}{Probability-Signature Dynamics}

\begin{document}
\maketitle

\begin{abstract}
Neural networks trained on modular addition tasks often develop Fourier-structured representations that support exact generalization. While prior work has identified these Fourier circuits, the mechanism by which gradient-based training selects them from the data distribution remains unclear. We address this question using probability signatures, which express leading gradient interactions through conditional statistics of the training distribution. For modular addition, these signatures are cyclic shift operators and are diagonalized by the discrete Fourier transform, yielding approximately decoupled Fourier-mode dynamics. This explains the emergence of Fourier sparsity, frequency matching, and phase alignment. 
The same framework resolves a puzzle under label noise: corrupted examples can show faster early loss decrease than clean examples, despite lacking a coherent generalization rule. We show that noise increases conditional label collisions, strengthening early shared-coordinate reinforcement. Finally, this method can be applied to other operators. Taking XOR as an example, we observed the predicted frequency in experiments.
\end{abstract}

\keywords{ Two Layer Network, Modular Addition, Training Dynamics}

\section{Introduction}

The modular addition task is a canonical setting in which to study grokking. During the initial phase of training, an over-parameterized model can fit individual training examples yet fails to uncover an interpretable rule. After substantially prolonged training, however, its test performance may improve abruptly even though the training loss has already become very small. This delayed transition is commonly referred to as \emph{grokking}. In modular addition, the onset of grokking is accompanied by the emergence of a distinctive structure in the weights. Prior studies have shown that trained networks develop Fourier circuits \citep{nanda2023progressmeasuresgrokkingmechanistic,gromov2023grokkingmodulararithmetic}. Specifically, the input and output weights of many hidden units become concentrated at matching Fourier frequencies, and appropriate combinations of these Fourier components can compute $(a+b)\bmod P$. Mechanistic interventions, including Fourier projection and ablation experiments, provide strong evidence that these components are functionally important for generalization. Their formation also reflects the transition from a memorizing solution to a generalizing one.

These results answer an important \emph{reverse} question: given a trained network, which circuit does it implement, and which components of that circuit support generalization? They have not fully answered the complementary \emph{forward} question: why should gradient-based training produce Fourier-structured parameters from the modular-addition data distribution in the first place? The fact that Fourier features can represent modular addition does not imply that the training dynamics must discover them. Meanwhile, current work can already account for observed phenomena in neural networks, such as frequency competition and phase alignment~\citep{he2026mechanismdynamicsmodularaddition}. However, its theoretical analysis is based on a Fourier decomposition of the weights, which need to presuppose that the Fourier basis is related to modular addition.

Our focus is therefore not to show again that Fourier-structured neurons can implement modular addition, nor to re-establish that these neurons support generalization. Instead, we study the missing forward direction
\[
\text{training distribution}
\quad\Longrightarrow\quad
\text{leading gradient terms}
\quad\Longrightarrow\quad
\text{Fourier-mode selection}.
\]
More specifically, we ask the following questions. Which statistics of the training data enter the leading-order parameter dynamics? Why are Fourier modes approximately decoupled in the modular-addition problem?
What determines whether a neuron is suppressed by weight decay or develops a dominant Fourier component? And, how to derive the weight structure in advance from the training distribution?

We study these questions in a two-layer embedding MLP trained on modular addition dataset with some randomly corrupted labels. Our main analytical tool is the \emph{probability signature} framework introduced by \citet{yao2025probabilitysignature}, which shows the correlation between data statistics and embedding without studying either modular addition learning or the key structure of data statistics. 

Our contributions are as follows.
\begin{itemize}
\item  We utilize probability signatures to directly explain the emergence of dominant Fourier modes during the learning dynamics.
\item We have explained the puzzle in the early memorization regime: uniformly random label noise consistently exhibits a faster initial loss decrease than clean data. Some statistics from training distribution can measure this difference.
\item The analysis framework can be extended to more tasks, such as XOR, enabling us to predict the structures that network parameters may form before training.
\end{itemize}

\section{Related Work}

\textbf{Grokking.}
Grokking, the delayed onset of generalization long after interpolation, was first reported on small algorithmic datasets, where weight decay proved essential \citep{power2022grokkinggeneralization}. Subsequent work explains the delay through properties of the optimization trajectory: effective theories of representation learning and norm growth \citep{liu2022towardsunderstandinggrokking}, loss landscapes shaped by weight decay and initialization scale \citep{liu2022omnigrok}, instabilities of adaptive optimizers \citep{thilak2022slingshotmechanism}, and the transition from lazy to rich training dynamics \citep{kumar2024grokkingtransitionlazyrich}. These characterize \emph{when} and \emph{why} generalization occurs, from the emergent Fourier circuits as a result of training. 
Based on this, we point out that weight decay determines the formation of the Fourier structure as a \emph{threshold}, which establishes the connection between weight decay and Grokking.

\textbf{Modular arithmetic.}
Prior work has established that trained networks can implement modular addition through sparse Fourier circuits, and has introduced diagnostics
such as frequency matching and phase alignment to characterize these circuits \citep{nanda2023progressmeasuresgrokkingmechanistic}.
Analytic and scaling-oriented studies have further related the norms and spectral profiles of Fourier modes to the learned solution \citep{gromov2023grokkingmodulararithmetic,michaud2023quantizationmodelneuralscaling}. These works provide important mechanistic and functional descriptions of Fourier structure in trained networks. 
Our analysis is based on these research results, but we present a forward perspective: starting from \textbf{training distribution} rather than from the Fourier circuits obtained through training. More recent work \citep{he2026mechanismdynamicsmodularaddition} provides a complete explanation for the frequency competition and phase alignment by decomposing neurons onto the Fourier basis. Compared with theirs, our contribution stands out in the fact that we do not require \textbf{the prior knowledge of} the Fourier circuit found in the modular-addition task. By studying the algebraic structure of the \emph{probability signature}, we derive the Fourier basis as the common eigenvectors. We not only explain the above phenomena, but also we hope to provide a unified explanation for the puzzle of early noise and the extension to other training tasks.

\textbf{Learning with corrupted labels.}
Neural networks can both generalize from structured examples and memorize
corrupted labels. This interaction has been studied broadly through early
learning and memorization dynamics
\citep{arpit2017closerlookmemorization,liu2020earlylearningregularization}.
For algorithmic and modular-arithmetic tasks, prior work has documented robust
generalization in the presence of label corruption and the coexistence of
generalizing and memorizing subnetworks
\citep{doshi2024togroknotgrok,liu2026unveilingmemorizationgeneralizationcoexistencecase}.
Our contribution is not a general theory of learning under label noise.
Instead, we isolate one early-time phenomenon: randomly corrupted examples lead to faster initial loss decrease, and we relate it to probability signature, which is computable from the training data.

\section{Preliminaries}
\subsection{Settings}
Let $\Omega=\{0,1,\ldots,P-1\}$ be a finite alphabet, where $P$ is a prescribed prime. We consider the prediction of a binary operation $F:\Omega\times \Omega\to \Omega$ from examples $X=(a,b)$ to the label $F(a,b)$. 

All pairs in $\Omega\times\Omega$ are divided into three disjoint subsets $\mathcal{D}_{\text{clean}},\mathcal{D}_{\text{noise}},\mathcal{D}_{\text{test}}$ of cardinalities $N_{\text{clean}}$, $N_{\text{noise}}$, and $N_{\text{test}}$, respectively. The labels of clean and test are set to $F(a,b)=(a+b) \mod P$, while the labels of noise are replaced by uniform random numbers. 

The training dataset consists of $\mathcal{D}_{\text{clean}}$ and $\mathcal{D}_{\text{noise}}$. We define $N_{\text{train}}=N_{\text{clean}}+N_{\text{noise}}$, where $N_{\text{train}}:N_{\text{test}}=8:2$ and $N_{\text{noise}}=\alpha N_{\text{train}}$. $\alpha$ represents the noise ratio.

\textbf{Model.} Let $\rve_a,\rve_b\in\mathbb{R}^P$ be one-hot encodings. We study the two-layer MLP 
$f(a,b)=W\, \sigma (U\rve_a+V\rve_b)\in\mathbb{R}^P$,
where $U,V\in\mathbb{R}^{D\times P}$, $W\in\mathbb{R}^{P\times D}$, $\rve_a,\rve_b$ means the one-hot encodings of input $a,b$, and $D\gg P$. The output distribution is $\operatorname{softmax}(f(a,b))$.

We use the cross-entropy loss function and employ $L^2$-weight decay with coefficient $\lambda$. Parameters are initialized independently according to
$U_{ij},V_{ij},W_{ij}\sim\mathcal{N}(0,D^{-2\gamma}),$
where $\gamma$ controls the initialization scale. Consider a continuous gradient flow model as an approximation, it is $\frac{\mathrm d\theta}{\mathrm dt}=-\eta (\nabla_\theta L+\lambda\theta)$. For more details about initialization, learning rate or optimizer, please refer to the appendix~\ref{app:train}.

\subsection{Probability signatures}

The Probability Signature proposed in the related work \citep{yao2025probabilitysignature} is a statistical quantity derived from sampling and not dependent on training. It can characterize the main terms of the gradient at different training stages. In order to create the application environment for Probability Signature, we first extended the fully connected network with the same embedding to our model.

\begin{proposition}[Gradient flow]
Assume the neural network $f(x,y)=W\sigma(U\rve_x+V\rve_y)$, using full batch GD with cross-entropy loss and ignoring weight decay, with the activation function $\sigma(h)=0+c_1h+\frac{c_2}2 h^2+o(h^2)$ which is $C^3$ near zero. If we sample training data from $(x,y,z) \sim \pi$, then we denote the probability that the first position is $x$ by $r_x$ and the random variable $\mathbb E_\pi[\cdot|x]$ by the conditional expectation under the fixed first position. 

Under the assumption of \textbf{small initialization} that  $\lVert{U}\rVert_\infty ,\lVert{V}\rVert_\infty ,\lVert{W}\rVert_\infty \le\epsilon$, we derived that
\[
\begin{aligned}
  \frac{\mathrm dU^x}{\mathrm dt}=r_x\eta\Bigg\{&
  c_1 W^\top(I-\frac1P\mathbf1_P\mathbf1_P^\top)\left[\mathbb E_\pi[\rve_z\mid x]
  -\frac{c_1}{P}W\bigl(U^x+\mathbb E_\pi[V^y\mid x]\bigr)\right]\\
  &+c_2\,\mathbb E_\pi\left[
  \bigl(U^x+V^y\bigr)\odot W^\top(\rve_z-\frac1P \mathbf1_P)\,\middle|\,x
  \right]\Bigg\}+\delta .
\end{aligned}
\]
where $U^x\in \mathbb{R}^d$ represent the $x$-th column of $U$ and $\lVert \delta\rVert_\infty=O(\eps^3)$ means the higher order terms.
\label{prop:gradient_flow}
\end{proposition}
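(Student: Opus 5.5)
The plan is to compute the exact gradient of the empirical cross-entropy loss with respect to the column $U^x$ and then Taylor-expand every factor in powers of $\epsilon$, keeping terms up to second order in the parameters. The exact gradient is easy to write down. With hidden pre-activation $h=U^x+V^y$ and logits $f=W\sigma(h)$, the loss on one sample is $\ell=-\log\operatorname{softmax}(f)_z$, and its logit gradient is $\partial\ell/\partial f=p-\rve_z$ with $p=\operatorname{softmax}(f)$. The chain rule gives
\[
\frac{\partial \ell}{\partial U^x}=\sigma'(h)\odot W^\top(p-\rve_z).
\]
Since $U^x$ enters only samples whose first coordinate is $x$, averaging over $\pi$ yields
\[
\frac{\mathrm dU^x}{\mathrm dt}=\eta\, r_x\,\mathbb E_\pi\bigl[\sigma'(h)\odot W^\top(\rve_z-p)\,\big|\,x\bigr].
\]
This is where the factor $r_x$ and the conditional expectation come from.

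Next I expand each factor under $\lVert U\rVert_\infty,\lVert V\rVert_\infty,\lVert W\rVert_\infty\le\epsilon$.
\begin{itemize}
\item Because $\sigma$ is $C^3$ near zero, $\sigma'(h)=c_1+c_2h+O(\lVert h\rVert^2)$, and $h=O(\epsilon)$.
\item Using $\sigma(0)=0$, we get $\sigma(h)=c_1h+O(\epsilon^2)$, so $f=c_1Wh+O(\epsilon^3)$ and $f=O(\epsilon^2)$.
\item Linearizing the softmax at $f=0$ gives $p=\frac1P\mathbf 1_P+\frac1P(I-\frac1P\mathbf1_P\mathbf1_P^\top)f+O(\lVert f\rVert^2)$.
\end{itemize}
Hence
\[
\rve_z-p=\Bigl(\rve_z-\tfrac1P\mathbf1_P\Bigr)-\tfrac{c_1}{P}\Bigl(I-\tfrac1P\mathbf1_P\mathbf1_P^\top\Bigr)Wh+O(\epsilon^3).
\]
The dimension $D$ is fixed here, so the constants in these $O(\cdot)$ terms may depend on $D$ and $P$. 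I will state this explicitly, since matrix-vector products sum over $D$ entries.

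Multiplying out and collecting terms by order:
\begin{itemize}
\item The order-one piece $c_1W^\top(\rve_z-\frac1P\mathbf1_P)$ is $O(\epsilon)$. Since $(I-\frac1P\mathbf1\mathbf1^\top)(\rve_z-\frac1P\mathbf1)=\rve_z-\frac1P\mathbf1$, it can be rewritten with the projector in front. Its conditional expectation is $c_1W^\top(I-\frac1P\mathbf1\mathbf1^\top)\mathbb E_\pi[\rve_z\mid x]$.
\item The piece $-\frac{c_1^2}{P}W^\top(I-\frac1P\mathbf1\mathbf1^\top)Wh$ is $O(\epsilon^3)$ in size. It is nonetheless kept as a linear-in-$h$ correction, since it is the term that feeds back through $W$. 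Taking the conditional expectation gives $U^x+\mathbb E_\pi[V^y\mid x]$, because $U^x$ is deterministic given $x$.
\item The $c_2$ piece $c_2\,h\odot W^\top(\rve_z-\frac1P\mathbf1)$ is $O(\epsilon^2)$ and gives the second line of the statement.
\item Everything else is collected into $\delta$: cross terms such as $c_2h\odot W^\top Wh$, the $O(h^2)$ part of $\sigma'$ times $W^\top(\cdot)$, and the softmax remainders. Each is $O(\epsilon^3)$ or smaller.
\end{itemize}

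The main obstacle is consistent order bookkeeping. The $-\frac{c_1}{P}WU^x$ term retained in the statement is formally of the same order, $\epsilon^3$, as the discarded terms such as $O(h^2)\,W^\top(\rve_z-\frac1P\mathbf 1)$. I would therefore interpret the proposition as isolating the leading contributions from each mechanism: the label-driven term, the linear feedback term, and the quadratic activation term. Concretely, I would show that $\delta$ collects exactly the remaining terms, and bound each of them by $C(D,P,\sigma)\epsilon^3$ using the uniform Taylor remainder of $\sigma'$ on a neighborhood of $0$ and the Lipschitz bound on the softmax Hessian. The key check is that no omitted term is of order $\epsilon$ or $\epsilon^2$. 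This holds because every omitted term contains at least three parameter factors: either two factors of $h$ and one of $W$, or one factor of $W$ and one of $f=O(\epsilon^2)$.
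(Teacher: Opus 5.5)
Your proposal is correct and follows the paper's proof essentially step for step. Both compute the exact flow $\eta\, r_x\,\mathbb E_\pi[\sigma'(h)\odot W^\top(\rve_z-p)\mid x]$, expand $\sigma'$, $\sigma$ and the softmax at zero, and take conditional expectations term by term. On your bookkeeping concern, the paper keeps $-\frac{c_1^2}{P}W^\top(I-\frac1P\mathbf 1_P\mathbf 1_P^\top)Wh$ because it retains all terms that are constant or linear in $h$, even though that term is $O(\epsilon^3)$; it then drops that same term as $O(\epsilon^3)$ in the proof of Theorem~\ref{thm:dominance}.
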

\emph{Proof.} See Appendix \ref{app:grad-flow}.

From the gradient flow equation, we can further decompose the conditional expectation. Assume sampling from the training distribution $(X,Y,Z) \sim \pi$, we define the probability signatures:
\[
\varphi^{(a)}(x,z)=\mathbb{P}(Z=z\mid X=x),\qquad
\varphi^{(b)}(x,z)=\mathbb{P}(Z=z\mid Y=x),
\]
the 3-order tensor joint probability signatures:
\[
\Phi^{(a)}(x,y,z)=\mathbb{P}(Z=z,Y=y\mid X=x),\quad
\Phi^{(b)}(x,y,z)=\mathbb{P}(Z=z,X=y\mid Y=x),
\]
and the trace-back probability signatures
\[
\psi^{(a)}(x,y)=\mathbb{P}(Y=y\mid X=x),\qquad
\psi^{(b)}(x,y)=\mathbb{P}(X=y\mid Y=x).
\]
We also denote $\varphi^{(a)}_x:=\sum_{z\in \Omega} \mathbb{P}(Z=z\mid X=x) e_z$, a column vector, and $\Phi^{(a)}_x:=\sum_{y,z\in \Omega} \mathbb{P}(Z=z,Y=y\mid X=x) e_y\otimes e_z$, a matrix. The same notation applies to the others. Then we can rewrite the gradient flow equation in Proposition~\ref{prop:gradient_flow} as
\[
\begin{aligned}
\frac{\mathrm dU^x}{\mathrm dt}
=
r_x\eta \Bigg\{
&
c_1W^\top \bigg[ ( \varphi_x^{(a)}
-\frac{1}{P}\mathbf 1_P) -
\frac{c_1}{P}
(I-\frac{1}{P}\mathbf 1_P\mathbf 1_P^\top) W
(U^x+V\psi_x^{(a)}) \bigg]
\\
&+
c_2
\bigg[
U^x\odot W^\top \varphi_x^{(a)} - \frac1P (U^x+V\psi_x^{(a)}) \odot (W^\top\mathbf 1_P)
+
\operatorname{diag}(V\Phi^{(a)}_xW)
\bigg]
\Bigg\}
+
\delta.
\end{aligned}
\]
The above statistics $\varphi,\Phi,\psi$ are referred to as Probability Signatures.

\section{Theoretical Analysis}
In this chapter, 
we hope to derive this frequency structure from a forward perspective. Reviewing the gradient flow equation \ref{prop:gradient_flow}, we divide it into $\varphi$-part, $\Phi$-part and $\psi$-part based on the association with different probability signatures. Additionally, due to the weight decay, there is also a $-\lambda U$ term. The proof of the following theorems will be found in Appendix \ref{app:dominance},\ref{app:add-dynamics}, and \ref{app:thm-dominant}. 

Theorem \ref{thm:dominance} characterizes the dominant part of the gradient flow equation.

\begin{theorem}[Dominance of the gradient]
\label{thm:dominance}
Under the condition of \ref{prop:gradient_flow} and non-zero weight decay, assume that $\eta=1$ and
\[
\varphi^{(a)}_x=\varphi^{(b)}_x=\psi^{(a)}_x=\psi^{(b)}_x=\frac{1}{P}\mathbf 1_P,\qquad \forall x\in \Omega,
\]
Under small initialization that $\lVert{U}\rVert_\infty ,\lVert{V}\rVert_\infty ,\lVert{W}\rVert_\infty \le \epsilon$ we have
\[
\begin{aligned}
\frac{\mathrm dU^x}{\mathrm dt}
&=-\lambda U^x+c\bigg[ \operatorname{diag}(V\Phi^{(a)}_xW)- \overline{V}\odot \overline{W} \bigg]+\delta_U, \\
\frac{\mathrm dV^x}{\mathrm dt}
&=-\lambda V^x+c\bigg[  \operatorname{diag}(U\Phi^{(b)}_xW)- \overline{U}\odot \overline{W} \bigg]+\delta_V, \\
\frac{\mathrm dW_z}{\mathrm dt}
&=-\lambda W_z+c\bigg[ \operatorname{diag}(U\Psi_z V^\top)- \overline{U}\odot \overline{V} \bigg]+\delta_W.
\end{aligned}
\]
where $c=c_2r_x,\ \lVert\delta\rVert_\infty \lesssim \epsilon^3$. Here we define $\overline{U}:=P^{-1}\sum_xU^x,\ \overline{V}:=P^{-1}\sum_xV^x,\ \overline{W}:=P^{-1}\sum_x(W^\top)^x$, and $\Psi_{z}:=\sum_{x,y\in \Omega} \mathbb{P}_\pi[X=x,Y=y\mid Z=z] e_x \otimes e_y$.
\end{theorem}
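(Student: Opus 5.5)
Starting from the rewritten form of Proposition~\ref{prop:gradient_flow} (the version expressed through $\varphi,\Phi,\psi$), I will add the weight decay term $-\lambda U^x$ and set $\eta=1$. Then I substitute the uniformity assumptions $\varphi^{(a)}_x=\psi^{(a)}_x=\frac1P\mathbf 1_P$ and check term by term which pieces cancel and which survive.

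\textbf{The $c_1$ bracket.} The first piece is $c_1W^\top(\varphi^{(a)}_x-\frac1P\mathbf 1_P)$, which vanishes exactly. The second piece is
\[
-\tfrac{c_1^2}{P}W^\top(I-\tfrac1P\mathbf 1_P\mathbf 1_P^\top)W(U^x+V\psi^{(a)}_x).
\]
It is cubic in the parameters, so it has size $O(\epsilon^3)$ and can be absorbed into $\delta_U$.

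\textbf{The $c_2$ bracket.} Using $W^\top\mathbf 1_P=P\,\overline W$ and $V\psi^{(a)}_x=V\frac1P\mathbf 1_P=\overline V$, the three terms become:
\begin{itemize}
\item $U^x\odot W^\top\varphi^{(a)}_x=U^x\odot\overline W$;
\item $-\frac1P(U^x+\overline V)\odot P\overline W=-U^x\odot\overline W-\overline V\odot\overline W$;
\item $\operatorname{diag}(V\Phi^{(a)}_xW)$.
\end{itemize}
The $U^x\odot\overline W$ contributions cancel. What remains is $c_2r_x[\operatorname{diag}(V\Phi^{(a)}_xW)-\overline V\odot\overline W]$, which is the claimed $U$ equation. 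The $V$ equation follows by the identical computation with the roles of the two input slots swapped, using $\varphi^{(b)},\psi^{(b)},\Phi^{(b)}$.

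\textbf{The $W$ equation (main obstacle).} Proposition~\ref{prop:gradient_flow} only states the flow for $U^x$, so I must derive this case directly. Write $h=U^x+V^y$ and expand $\sigma(h)=c_1h+\frac{c_2}2h^2+O(h^3)$. Since the logits $f$ are $O(\epsilon^2)$, the softmax is $\frac1P\mathbf 1_P+O(\epsilon^2)$. The gradient with respect to row $W_z$ is
\[
\mathbb E_\pi\big[(\mathbf 1[Z=z]-\mathrm{softmax}_z)\,\sigma(h)\big].
\]
\emph{Linear part.} The term $c_1\,\mathbb E[(\mathbf 1[Z=z]-\frac1P)(U^X+V^Y)]$ involves the marginals of $X$ and $Y$ given $Z=z$. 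Under the uniform signatures and the modular-addition symmetry, these conditionals are uniform, so this part reduces to differences of averages that vanish. This is the delicate point: I need $\mathbb P(X=x\mid Z=z)$ to be uniform. I will argue it from the assumed uniform $\varphi$ via Bayes together with uniform $r_x$; failing that, I will add it as an implicit assumption of the statement.

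\emph{Quadratic part.} This is $\frac{c_2}{2}\mathbb E[(\mathbf 1[Z=z]-\frac1P)(U^X+V^Y)^{\odot2}]$. The squared terms $(U^X)^{\odot2}$ and $(V^Y)^{\odot2}$ cancel by the same uniformity. The cross term $U^X\odot V^Y$ gives $\mathbb P(Z=z)\operatorname{diag}(U\Psi_zV^\top)$ minus its unconditional average. Because $\psi$ is uniform, the unconditional joint law of $(X,Y)$ factorizes, so that average is $\overline U\odot\overline V$.

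\textbf{Error terms and constants.} The softmax corrections and the $O(h^3)$ remainder are all $O(\epsilon^3)$, giving $\delta_W$. Finally, I identify the prefactor $\mathbb P(Z=z)$ with $r_x$, which holds under uniform marginals, so the constant is $c=c_2r_x$ as stated.
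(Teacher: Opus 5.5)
Your proof is correct and follows the paper's argument. Starting from Proposition~\ref{prop:gradient_flow}, the uniform signatures kill the $c_1W^\top(\varphi^{(a)}_x-\frac1P\mathbf 1_P)$ term and the $U^x\odot\overline W$ pieces, the $W^\top\Pi_PW(\cdot)$ term goes into $\delta$, and the surviving conditional expectation is identified with $\operatorname{diag}(V\Phi^{(a)}_xW)$; for $W$ you carry out explicitly the computation the paper only calls ``the same''.

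Your hedge in the $W$ step is unnecessary, so no extra assumption is needed:
\begin{itemize}
\item The linear and squared terms cancel because $\mathbb P(X=x,Z=z)=r_x/P$, which is exactly uniformity of $\varphi^{(a)}$ (likewise $\varphi^{(b)}$).
\item Uniform $\psi^{(a)},\psi^{(b)}$ give $\mathbb P(X=x,Y=y)=r_x/P=\mathbb P(Y=y)/P$. This forces $r_x\equiv 1/P$ and hence $\mathbb P(Z=z)=1/P$.
\end{itemize}
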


\subsection{The Fourier basis derived from the signatures}\label{sec:signature-basis}
Let's assume there is no noise and the training is conducted on the entire clean dataset.
We define $\Phi_x^+:=P^{-1}\sum_{j\in \mathbb Z_P} e_j\otimes e_{j+x}$ as the joint signature in this situation.
Write $S_x=P\Phi_x^{+}$, \textbf{which is a shift-$x$ permutation matrix}. They satisfy
\[
S_xS_y=S_{x+y},\qquad S_x^*=S_{-x},\qquad S_x^*S_x=I.
\]
Here $^*$ denotes the conjugate transpose and $I$ is the $P\times P$ identity matrix.
They form a commuting family of normal operators and therefore possess a common orthonormal eigenvectors over $\mathbb C$. The basis can be obtained directly from the simplest nontrivial shift: $S_1q=\omega q$ implies $F(y+1)=\omega F(y)$ for some unit root $\omega^P=1$, which yields
\begin{equation}
F_r(s)=P^{-1/2}e^{-2\pi \mathrm irs/P},\qquad
\Phi_x^{+}F_r=P^{-1}e^{2\pi \mathrm irx/P}F_r.
\end{equation}
These vectors are the normalized Fourier basis, and \(F:=[F_0,F_1,\cdots,F_{P-1}]\) is the standard Fourier matrix. We use $\hat{\rvu}:=F\rvu$ to represent the Discrete Fourier Transform (DFT) of vector $\rvu$.

Estimate by the main term of Theorem \ref{thm:dominance}, we find that this system is separable for each hidden unit $d\in [D]$. Thus we can consider this equation on one hidden state $\rvu,\rvv,\rvw\in \mathbb{R}^{P}$, with system
$$
\dot{\rvu}_x= c\bigg[\underbrace{\operatorname{diag}(\rvv^\top \Phi^{+}_x \rvw)}_{\nu_x}- \underbrace{ (P^{-1}\sum_{k=0}^{P-1} \rvv_k)(P^{-1}\sum_{k=0}^{P-1} \rvw_k)}_{\nu_{mean}} \bigg]-\lambda \rvu_x,
$$
We can expand the middle term $\nu_x$ by
$$
\nu_x=\rvv^\top (F^{-1}\Lambda^xF)\rvw=(F^*\rvv){\Lambda^x}(F\rvw)=(1,\omega^x,\cdots,\omega^{x(P-1)})(\overline{F\rvv} \odot F\rvw ),
$$
At the same time, the Fourier transform of $\nu_{mean}$ will be absorbed into the zero-frequency component, because it is independent of the index $x$.

Multiplying on the left by $F$, we have: 
\[
(F\rvu)_r=C(\overline{F\rvv})_r (F\rvw)_r-\lambda \rvu_r, \quad (r\ne 0),
\]
Similarly, we find that for each frequency $r\ne 0$,
\[
\dot{\widehat \rvu}_r\approx C\widehat \rvw_r\overline{\widehat \rvv_r}-\lambda\widehat \rvu_r,\qquad
\dot{\widehat \rvv}_r\approx C\widehat \rvw_r\overline{\widehat \rvu_r}-\lambda\widehat \rvv_r,\qquad
\dot{\widehat \rvw}_r\approx C\widehat \rvu_r\widehat \rvv_r-\lambda\widehat \rvw_r.
\]

where $\hat{\rvv}$ means DFT and $\bar{\rvv}$ means conjugate. That is, the system is \textbf{decoupled at each nonzero frequency}. Each 3-variable system $\hat \rvu_r,\hat \rvv_r,\hat \rvw_r$ corresponding to frequency $r$ will evolve independently.

The next theorem indicates that the conditions for neurons to grow depend on the initialization, and weight decay plays a filtering role. Moreover, growing neurons always develop a dominant frequency and adhere to specific phase patterns.

\begin{proposition}
 [Dominant frequency and phase alignment]\label{prop:freq}
Fix a hidden unit $d$. For each nonzero frequency $r$, let $U_r(t)=\widehat U_{d,r}(t)$, $V_r(t)=\widehat V_{d,r}(t)$, and $W_r(t)=\widehat {W^d}_{r}(t)$. Assume $C>0$ and that the three mode amplitudes remain nonzero up to the first singularity. Let $s_r^\star$ be the blow-up time of the corresponding undamped system. If there exists some $r$, $s_r^\star<C/\lambda$ and
$s_r^\star=\min_{i\neq 0} \{s_i\},$
then $r$ is the first frequency to blow up. Hence it makes the dominant frequency. Moreover, it is going to hold $\arg U_r(t)+\arg V_r(t)-\arg W_r(t)\to 0$, when $t$ approaches the blow-up time.
\end{proposition}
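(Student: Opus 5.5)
Starting from the decoupled frequency system derived above, I would fix a hidden unit $d$ and a nonzero frequency $r$ and study the complex three-variable ODE
\[
\dot U_r = C W_r\overline{V_r}-\lambda U_r,\qquad \dot V_r = C W_r\overline{U_r}-\lambda V_r,\qquad \dot W_r = C U_rV_r-\lambda W_r .
\]
\textbf{Step 1 (remove the damping).} Substitute $U_r=e^{-\lambda t}\tilde U_r$, and likewise for $V_r,W_r$, and introduce the new time $s=(1-e^{-\lambda t})/\lambda$, so that $\mathrm ds=e^{-\lambda t}\mathrm dt$. The quadratic right-hand side then picks up exactly one factor $e^{-\lambda t}$, and $(\tilde U_r,\tilde V_r,\tilde W_r)$ solves the undamped system with constant $C$. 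Since $s$ ranges only over $[0,1/\lambda)$ as $t\to\infty$, the damped mode blows up in finite time if and only if the undamped blow-up time $s_r^\star$ is reachable. With the normalization in the statement, the threshold is $s_r^\star<C/\lambda$; I would track the constant carefully so that it matches the statement. Otherwise the mode decays to zero, and this is the filtering role of weight decay. Because $s(t)$ is strictly increasing and the same for every $r$, the ordering of physical blow-up times equals the ordering of the $s_r^\star$. So the frequency minimizing $s_r^\star$ blows up first, and its amplitude dominates all other modes near that time.

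\textbf{Step 2 (polar reduction of the undamped system).} Write $\tilde U_r=a e^{\mathrm i\alpha}$, $\tilde V_r=b e^{\mathrm i\beta}$, $\tilde W_r=w e^{\mathrm i\gamma}$, and set $\theta=\alpha+\beta-\gamma$. Separating real and imaginary parts gives
\[
\dot a = C b w\cos\theta,\quad \dot b = C a w\cos\theta,\quad \dot w = C ab\cos\theta,
\]
\[
\dot\theta = -C\sin\theta\Big(\frac{bw}{a}+\frac{aw}{b}+\frac{ab}{w}\Big).
\]
The quantities $a^2-b^2$ and $a^2-w^2$ are conserved, so the amplitudes are determined by one scalar. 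The product $\rho=abw$ obeys $\dot\rho=C\cos\theta\,(b^2w^2+a^2w^2+a^2b^2)$. Together with the identity $\frac{\mathrm d}{\mathrm ds}(\rho\sin\theta)=0$, which I would verify directly from the equations above, this makes the system integrable. The blow-up time $s_r^\star$ is then an explicit integral depending on the initial data, and it is finite exactly when $\rho\cos\theta$ eventually becomes positive.

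\textbf{Step 3 (phase alignment).} From the conserved quantity $\rho\sin\theta=\text{const}$ and $\rho\to\infty$ at blow-up, we get $\sin\theta\to0$. On the blow-up branch $\dot\rho>0$, so $\cos\theta>0$ and hence $\theta\to0$ modulo $2\pi$. Undoing the rescaling in Step 1 multiplies each variable by a positive real factor, so the phases are unchanged and $\arg U_r+\arg V_r-\arg W_r\to0$.

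\textbf{Main obstacle.} The hard step is the conservation law and the exact characterization of $s_r^\star$. If $\rho\sin\theta$ turns out not to be exactly conserved, I would instead use the $\dot\theta$ equation. Its coefficient is at least $3C\rho^{2/3}$ by AM–GM, so $\theta$ is driven to $0$ at a rate that diverges at blow-up. This alone gives alignment, while finiteness of $s_r^\star$ is taken as a hypothesis of the statement. I would also note that the amplitude-nonvanishing hypothesis is what keeps the polar coordinates legitimate.
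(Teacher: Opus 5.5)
Your proof is correct. Step~1 is the paper's own argument. The paper puts $C$ into the clock, $s=\frac{C}{\lambda}(1-e^{-\lambda t})$, so its undamped system has unit coefficient and the threshold reads $s_r^\star<C/\lambda$. Keeping $C$ in the system, as you do, gives the equivalent threshold $1/\lambda$. In both, the monotone map $s\mapsto t$ transfers the ordering of blow-up times.

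The routes differ on phase alignment. The paper uses only the phase equation $\dot\delta=-C\sin\delta\,(BG/A+AG/B+AB/G)$. It notes that $\delta=0$ is stable and $\delta=\pi$ unstable, and appeals to random initialization avoiding $\pi$ with probability one. You instead use the exact invariant $\rho\sin\theta=\operatorname{Im}(\tilde U_r\tilde V_r\overline{\tilde W_r})$. It is indeed conserved, because $\frac{\mathrm d}{\mathrm ds}(\tilde U_r\tilde V_r\overline{\tilde W_r})=C(a^2b^2+a^2w^2+b^2w^2)$ is real. You combine it with $\rho\to\infty$ (all amplitudes diverge together by your conserved differences $a^2-b^2$, $a^2-w^2$) to get $\sin\theta=k/\rho\to0$. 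You then rule out the limit $\theta\to\pi$, because it would make $\rho$ eventually decreasing.

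Yours is the tighter argument. Stability of $\delta=0$ by itself only gives monotone drift of $|\delta|$ toward $0$. Convergence at the blow-up time would additionally need the bracket to be non-integrable up to $s_r^\star$, which the paper does not check. Your invariant yields convergence with an explicit rate. It also needs no probability-one caveat, since the blow-up hypothesis already excludes the $\theta\equiv\pi$ branch. What the paper's route gives in exchange is that $|\delta|$ decreases monotonically along the entire trajectory, not just asymptotically.

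The real part $\rho\cos\theta$ has the same positive derivative, so it is strictly increasing. Together with $\frac{\mathrm d}{\mathrm ds}a^2=2C\rho\cos\theta$, this confirms your characterization of when $s_r^\star$ is finite.

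There is one slip in your fallback. The three ratios multiply to $abw=\rho$, so AM--GM gives $3C\rho^{1/3}$, not $3C\rho^{2/3}$. A divergent rate alone would also not force $\theta\to0$ without non-integrability. Since the invariant is exact, the fallback is not needed.
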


Through the proposition, we can summarize the mechanism of weight decay filtering. The argument $s^\star_r$ depends on the initial weights of a group of neurons $\rvu_d, \rvv_d, \rvw_d$, and weight decay provides a unified survival threshold, and the competition of patterns and the initial random asymmetry determine which frequencies cross this threshold.

\subsection{Estimation of noise}
The actual training dataset is $80\%$ of the complete data with $\alpha$ noise inside. The following theorem will provide an approximation to bridge the gap caused by sampling.

\begin{theorem}[Estimation of noise]\label{thm:convolution}
Under Proposition~\ref{prop:gradient_flow}, sample $\mathcal D$ as above experiments with training data ratio $\rho$ and uniform-label noise ratio $\alpha$. Take $U^x_\star:= \operatorname{diag}(V\Phi^{+}_xW)-\overline{V}\odot \overline{W}$ as the main term under the full-addition signature. Then we estimate
\[
\dot{U}^x-\frac{\eta c_2(1-\alpha)}{P} U^x_\star+\lambda U^x=  \mathcal E_U^x+\delta,
\]
where $\lVert\delta\rVert _\infty\lesssim \epsilon^3$. The sampling error is controlled by
\[
\mathbb E_{\mathcal D}\mathcal E_\bullet=0, \quad
\operatorname{Var}_{\mathcal D}(\mathcal E_\bullet)
\lesssim\frac{[(1-\rho)+\alpha]\eta^2}{\rho P^3}
\left(c_1^2\epsilon^2+c_2^2\epsilon^4\right).
\]
Similar results also hold for $V$ and $W$.
\end{theorem}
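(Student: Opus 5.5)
We start from the rewritten gradient flow after Proposition~\ref{prop:gradient_flow} and add the weight-decay term $-\lambda U^x$. Every data-dependent quantity enters only through the empirical signatures $r_x$, $\varphi^{(a)}_x$, $\psi^{(a)}_x$ and $\Phi^{(a)}_x$ of the sampled dataset $\mathcal D$. The plan is to write each empirical signature as its population mean under the sampling scheme plus a zero-mean fluctuation. We then show that the mean part reproduces $\frac{\eta c_2(1-\alpha)}{P}U^x_\star$, and collect all fluctuation contributions into $\mathcal E_U^x$. The $O(\epsilon^3)$ remainder is inherited directly from Proposition~\ref{prop:gradient_flow}.

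\textbf{Step 1 (means).} The sampling is uniform. A pair $(x,y)$ lies in the training set with probability $\rho$, and conditional on that it is noisy with probability $\alpha$. For a noisy pair the label is uniform on $\Omega$. Averaging over $\mathcal D$ (in the ratio-of-means approximation) therefore gives
\[
\mathbb E\,\Phi^{(a)}_x\approx(1-\alpha)\Phi^{+}_x+\alpha P^{-2}\mathbf 1_P\mathbf 1_P^\top,\qquad \mathbb E\,\varphi^{(a)}_x\approx\mathbb E\,\psi^{(a)}_x\approx P^{-1}\mathbf 1_P,\qquad \mathbb E\, r_x\approx P^{-1}.
\]
Substituting these into the flow equation gives the following:
\begin{itemize}
\item The $c_1$ terms cancel, because $\varphi_x-P^{-1}\mathbf 1_P$ vanishes in mean and $(I-P^{-1}\mathbf 1\mathbf 1^\top)W V\mathbf 1_P/P$ pairs against $U^x$ exactly as in Theorem~\ref{thm:dominance}.
\item The $c_2$ terms give $\operatorname{diag}(V\mathbb E\Phi^{(a)}_xW)-\overline V\odot\overline W$ up to $U^x$-terms absorbed as before.
\item The noisy part $\alpha P^{-2}\operatorname{diag}(V\mathbf 1\mathbf 1^\top W)=\alpha\,\overline V\odot\overline W$ combines with the $-\overline V\odot\overline W$ subtraction, which leaves $(1-\alpha)U^x_\star$.
\end{itemize}
Multiplying by $r_x\approx 1/P$ yields the coefficient $\eta c_2(1-\alpha)/P$. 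We verify this cancellation carefully, since it is the only place where $\alpha$ enters the main term.

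\textbf{Step 2 (fluctuations).} Define $\mathcal E_U^x$ as the remaining terms. Each is linear in one centred fluctuation: $\Delta\varphi_x=\varphi^{(a)}_x-\mathbb E\varphi^{(a)}_x$, $\Delta\psi_x$, $\Delta\Phi_x$, or $\Delta r_x$ times the mean bracket. These terms have zero mean to the order considered. The $c_1$ fluctuation terms are $c_1 W^\top\Delta\varphi_x$ and $c_1^2 W^\top W V\Delta\psi_x/P$. Given $\|W\|_\infty\le\epsilon$, their size is $c_1\epsilon\|\Delta\varphi_x\|$ (the second is of lower order). The $c_2$ terms, such as $\operatorname{diag}(V\Delta\Phi_xW)$ and $(U^x+V\Delta\psi_x)\odot W^\top\Delta\varphi_x$, are bounded by $c_2\epsilon^2$ times the fluctuation size. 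This explains the $c_1^2\epsilon^2+c_2^2\epsilon^4$ factor after squaring.

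\textbf{Step 3 (variance of the signatures).} This is the main obstacle. Row $x$ contains about $n_x\approx\rho P$ training examples. For each $z$, the entry $\varphi^{(a)}_x(z)$ is an empirical frequency. On clean data it is a deterministic function of which $y$'s were kept, because $z=x+y$. Its randomness therefore comes from two sources: the missing fraction $1-\rho$ (a hypergeometric count, variance $\sim(1-\rho)/(\rho P\cdot P)$) and the noisy labels (a multinomial count, variance $\sim\alpha/(\rho P\cdot P)$). Together these give per-entry variance $\lesssim[(1-\rho)+\alpha]/(\rho P^2)$. Including the prefactor $r_x\approx1/P$ contributes another $1/P^2$, while summing the $P$ entries against bounded weights contributes a factor $P$. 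The net result is $[(1-\rho)+\alpha]/(\rho P^3)$, which matches the claim.

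We would first carry this out for $\varphi$ and then check that $\Phi$ and $\psi$ obey the same scaling. The risk is in the $\Phi$ entries: each entry involves a single $(y,z)$ cell, so the bound must use the fact that $\operatorname{diag}(V\Delta\Phi_xW)$ averages over $P$ independent cells. The corresponding statements for $V$ follow by symmetry. For $W$, we repeat the argument with $\Psi_z$ and the label marginal.
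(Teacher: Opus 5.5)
Your plan follows the paper's proof. You take the sampling mean of the empirical field and observe that the uniformly relabelled examples contribute only $\alpha\,\overline V\odot\overline W$; equivalently, their centred labels $e_{\widetilde Z}-\frac1P\mathbf 1_P$ average to zero. With the prefactor $\mathbb E\,r_x=1/P$, this leaves $\frac{(1-\alpha)}{P}U^x_\star$. The remainder is then bounded by the finite-population variance of the inclusion pattern plus the multinomial variance of the corrupted labels, which is the same decomposition the paper uses.

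The one step to tighten is your ratio-of-means approximation, together with the products $\Delta r_x\cdot\Delta\Phi^{(a)}_x$ (and similar) that your list of fluctuation terms omits. As written, these only give $\mathbb E_{\mathcal D}\mathcal E_\bullet=0$ to leading order, whereas the statement claims it exactly. The paper avoids the issue by writing $r_x\,\mathbb E_\pi[\,\cdot\mid x]=\frac1m\sum_y I_{xy}(\cdot)$, with inclusion indicators $I_{xy}$ and the deterministic sample size $m$. This expression is linear in the sampling variables, so its mean is exact. Within your own formalism you can reach the same exactness: condition on $n_x$ and use exchangeability to show that the conditional means of $\varphi^{(a)}_x,\psi^{(a)}_x,\Phi^{(a)}_x$ do not depend on $n_x$.
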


\section{Validation on Fourier frequency}

\begin{figure}[t]
  \centering
  \includegraphics[width=\linewidth]{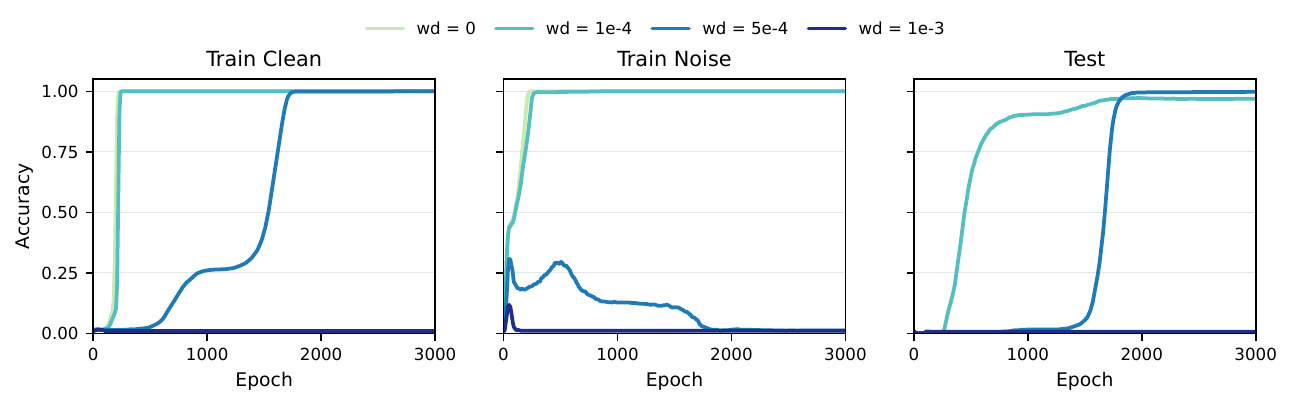}
  \caption{Accuracy-Epoch curves of $\lambda=0,10^{-4},5\times 10^{-4},10^{-3}$. For $\lambda=0$, model reaches 100\% on clean and noise, but 0 on test, overfitting. For $\lambda=10^{-4}$, model reaches almost 100\% on clean, noise and test, generalization. For $\lambda=5\times 10^{-4}$, model forgets the noise. And for $\lambda=10^{-3}$, model cannot converge.}
  \label{fig:curve}
\end{figure}

We regard all the column vectors of $W$ or the row vectors of $U$ and $V$ as \textbf{neurons}, since the output of the neural network is $f(a,b)=\sum_{d=1}^D W^d (U_d e_a + V_d e_b)$. In analysis, we assume that they evolve approximately independently. We denote the concatenation of the $d$-th row of $U,V$, the $d$-th column of $W$ by $\theta_d=(U_d,V_d,W^d)$ and the collection of $\theta_d$ by $\mathcal{V}$. We consider Fourier transform of some neuron $u\in \mathbb{R}^P$, it is
\(
\widehat{u}_k=\sum_{x=0}^{P-1}u_x\,\mathrm{e}^{-\mathrm i2\pi kx/P}\) for \( k=0,\ldots,P-1.
\)
 We define the dominant frequency and sparsity by
\[
\tau(u):=\operatorname{argmax}_{k\neq 0}|\widehat{u}_k|,\quad\rho(u)=\frac{2|\widehat u_{\tau(u)}|^2}{\sum_{k=1}^{P-1}|\widehat u_k|^2}\in[0,1].
\]
It is noted that frequency $r$ and $P-r$ are regarded as the same under Fourier transform. 

\textbf{Experiments.} 
We conducted experiments under different weight decay settings, with the same $\sigma(x)=\text{GELU}(x)$, noise ratio $\alpha = 0.1$ and initialization scale $\gamma = 0.5$, in Figure~\ref{fig:curve}. Additional experiments with other activation functions are reported in Appendix \ref{app:activation},\ref{app:relu}. It is clearly evident from Figure~\ref{fig:curve} that weight decay has a significant impact on the direction of the training process. We observe that $\lambda = 0.0005$ acts as a \emph{regularization}. That is, the model generalizes well while ignoring all the noise. 

We naturally derived from weight decay to consider observing the norms of each group of neurons in the network. 
The growth of the norms can be described as:
\[
\frac12\frac{\mathrm d}{\mathrm dt}\|\theta_d\|^2
=\underbrace{-\eta\langle\theta_d,\nabla_dL\rangle}_{\text{CE drive}}
-\underbrace{\lambda\|\theta_d\|^2}_{\text{decay}}.
\]

\begin{figure}[t]
  \centering
  \includegraphics[width=\linewidth]{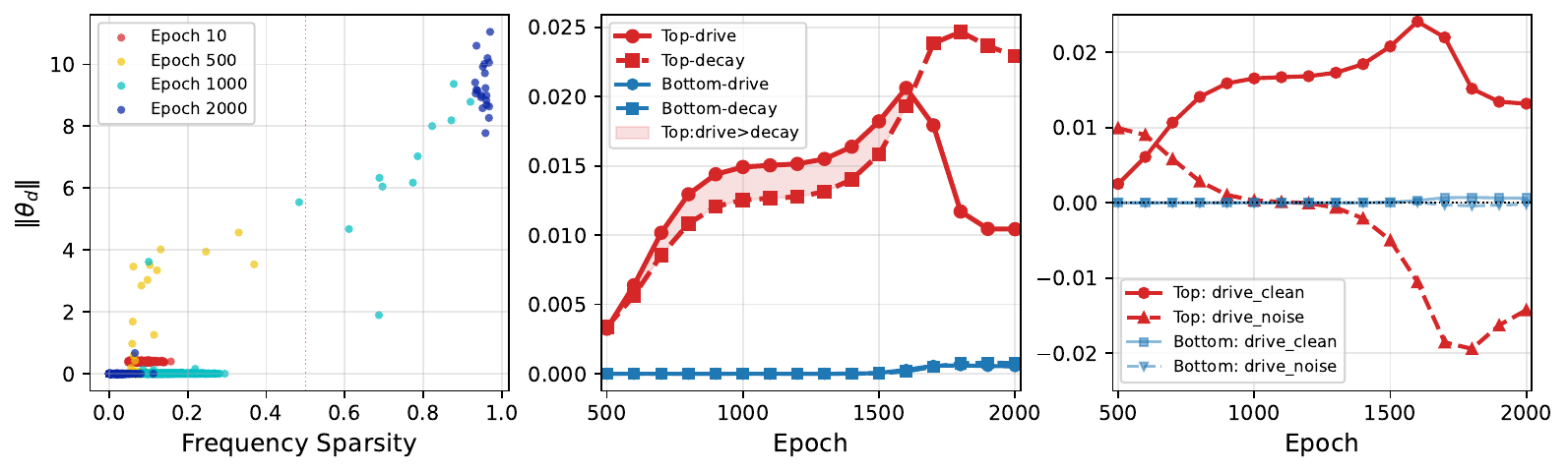}
  \caption{(A) Scatter between Fourier sparsity and the norm of neurons at epoch 10, 500, 1000, 2000. (B) The highest and lowest sparsity 100 neurons are selected, denoted by $\mathcal{V}_t$ and $\mathcal{V}_b$. We observe $\ell_{\mathrm{drive}}^{\mathrm{top}}=\sum_{\theta\in\mathcal{V}_t} \overline{ \ell_{\text{drive}}} $ and similarly for \(\ell_{\mathrm{decay}}^{\mathrm{top}},\ell_{\mathrm{drive}}^{\mathrm{bot}},\ell_{\mathrm{decay}}^{\mathrm{bot}}\)  (C) The clean/noise dataset contributions to the drive. It is $\ell_{\mathrm{drive}}^{\mathrm{top,clean}}=\sum_{\theta\in\mathcal{V}_t} \overline{ \ell_{\text{drive}}} \mid_{\mathcal D_{\text{clean}}} $ and similarly for others. }
  \label{fig:drive_decay_selectivity}
\end{figure}

In Figure ~\ref{fig:drive_decay_selectivity}A, we present the evolution of the norm and frequency sparsity of the neurons. It can be observed that due to weight decay, the norm of most neurons is continuously reduced to 0, while a few neurons can grow. Meanwhile, all the new-grown neurons have relatively high sparsity.

In Figure ~\ref{fig:drive_decay_selectivity}B, we report that norm growth is concentrated on high-sparsity neurons, and the growth of the norm and the improvement of the accuracy occur simultaneously in terms of time, according to Figure \ref{fig:curve}.
We also compare separately on clean and noisy examples in Figure \ref{fig:drive_decay_selectivity}C. It shows that CE--drive term of high-sparsity neurons is contributed primarily by clean data. And these driving forces are almost completely unrelated to the noise in the data.

\paragraph{Cosine Similarity Check.} Theorem \ref{thm:dominance} and \ref{thm:convolution} provide two estimates for the real gradient flow. We conducted numerical experiments in Appendix \ref{app:gradient_flow_approximation}, comparing the cosine similarity between the estimated gradient and the actual gradient. The results show that it can achieve cosine-similarity $\ge0.9$ with the actual gradient.

We find that the dominant frequencies of $U_d$, $V_d$, and $W^d$ tend to agree if the neuron reaches high sparsity. If the common frequency is $\tau$, we extract the components at the dominant frequency by
\((\rvu_d,\rvv_d,\rvw_d)=((\widehat{U_d})_\tau,(\widehat{V_d})_\tau,(\widehat{W^d})_\tau),\) then their phases will tend to satisfy
\(
\arg(\rvu_d)+\arg(\rvv_d)=\arg(\rvw_d) \pmod{2\pi}.
\)
Figure~\ref{fig:frequency_phase} shows that frequency matching and phase alignment emerge together during training, which is consistent with the result of Prop \ref{prop:freq}.

\paragraph{Full-batch GD check.}
Using the full addition table, a GELU-activated network, and shared initialization, we compare exact CE training with the truncated model. We simulate both systems over 50 independently evolved neurons and report the dominant frequencies to which they ultimately converge, as well as the frequency sparsity. For the valid neurons, the dominant frequencies predicted by the truncated model exhibit 93\% agreement with the converged dominant frequencies of the actual GELU-network. Experimental details are provided in the Appendix~\ref{app:frequency}.

\begin{figure}[t]
  \centering
  \includegraphics[width=\linewidth]{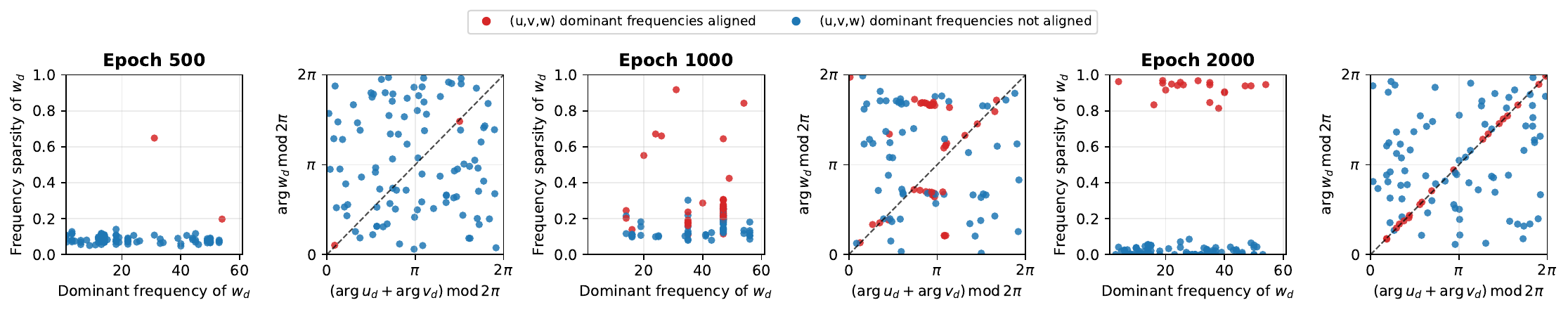}
  \caption{Frequency matching and phase alignment. In the odd columns, the horizontal coordinate is the dominant nonzero Fourier frequency of $\rvw$ and the vertical coordinate is its frequency sparsity; in the even columns, the coordinates are $\arg(\rvu)+\arg(\rvv)\bmod 2\pi$ and $\arg(\rvw)\bmod 2\pi$, respectively. Red points indicate exact agreement of the dominant frequencies of $\rvu_d$, $\rvv_d$, and $\rvw_d$, while blue points indicate a mismatch; dashed lines show phase equality.}
  \label{fig:frequency_phase}
\end{figure}

\section{Application of Probability Signature}
\subsection{The puzzle of early noise}\label{chapter:puzzle}
\begin{figure}[t]
  \centering
  \includegraphics[width=\linewidth]{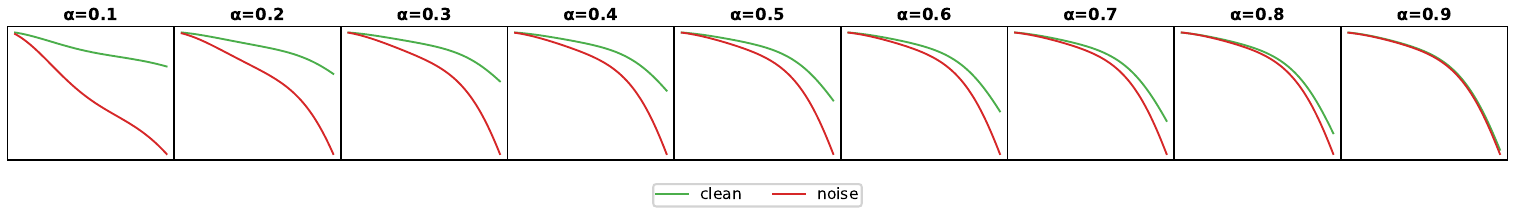}
  \caption{Early($<100$ epochs) clean and noise loss dynamics. $\alpha$ means the noise ratio. Noise loss always decreases more rapidly.}
  \label{fig:early_clean_noise}
\end{figure}
We identify a puzzle in the early training: uniformly random label noise consistently exhibits a faster initial loss decrease than clean data. It was also recorded by \citep{liu2026unveilingmemorizationgeneralizationcoexistencecase} but has not been resolved. From Figure~\ref{fig:early_clean_noise}, this phenomenon holds under almost any ratio of noise. Although this may appear inconsistent with the frequency principle~\citep{xu2020fprinciple}, which states that high-frequency data components are fitted later and low-frequency components earlier. In the discrete setting, the similar redefinition of frequency will be the probability that the predicted label will change while changing one of the input tokens. In this section, we employ probability signatures to quantify this speed difference.

In the small-initialization regime, the loss derivative is approximately
\(
\dot{L}\propto-\sum_{i=1}^{N}\sigma(U^{x_i}+V^{y_i})^\top\left(\sum_{j=1}^{N}\mathbf{1}_{\{z_j=z_i\}}\sigma(U^{x_j}+V^{y_j})\right),
\)
with sampling $(x_i,y_i,z_i)$ from the training distribution. We ignore weight decay because we are comparing the loss of different parts within the same training. From this, we consider counting the numbers of sample pairs that share common coordinates.

For pairs with distinct coordinates define $\kappa(0)=\mathbb{E}[\sigma(\alpha+\beta)^\top\sigma(\gamma+\delta)]$, while for pairs sharing one coordinate define $\kappa(1)=\mathbb{E}[\sigma(\alpha+\beta)^\top\sigma(\alpha+\gamma)]$, where $\alpha,\beta,\gamma,\delta\sim\mathcal{N}(0,1/D)$ are i.i.d. vectors.

Generally, we have $\kappa(1)>\kappa(0)$ (see appendix \ref{app:early_noise_prediction}.1).
Let $n_z$ be the number of examples labeled $z$ and define $K_0=\sum_z n_z^2$. Then the following theorem can compare the early loss for full training dataset. Note that $\varphi^{(a)},\varphi^{(b)}$ here are understood as matrices.

\begin{theorem}[Early loss decrease]\label{thm:early-loss}
Suppose the training dataset contains every distinct pairs $(a,b)\in \Omega \times \Omega$, and the model structure is the same as that mentioned before. For different label distribution, the early-time loss-slope is positively correlated with
\[
\mathbb E \dot{L}_{\text{CE}}\propto-\kappa(0)K_0-(\kappa(1)-\kappa(0))K_1, \quad K_1=P^2(\lVert\varphi^{(a)}\rVert_2^2+\lVert\varphi^{(b)}\rVert_2^2-2).
\]
\end{theorem}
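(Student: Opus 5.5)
We start from the small-initialization expression for the loss derivative stated just before the theorem,
\[
\dot L\propto-\sum_{i,j}\mathbf 1_{\{z_i=z_j\}}\,\sigma(U^{x_i}+V^{y_i})^\top\sigma(U^{x_j}+V^{y_j}).
\]
We take the expectation over the random initialization, treating the columns $U^x,V^y$ as i.i.d.\ Gaussian vectors $\mathcal N(0,1/D)$ (after rescaling). The key observation is that the expectation of each summand depends only on how many input coordinates the pair $(i,j)$ shares. If they share no coordinate, i.e.\ $x_i\neq x_j$ and $y_i\neq y_j$, the four embedding vectors are independent and the expectation is $\kappa(0)$. If they share exactly one coordinate ($x_i=x_j$, $y_i\ne y_j$, or the symmetric case), it is $\kappa(1)$, using the symmetry $U\leftrightarrow V$ in distribution. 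Sharing both coordinates forces $i=j$, since the dataset contains each pair once; these diagonal terms contribute $\kappa(2)=\mathbb E\|\sigma(\alpha+\beta)\|^2$, which is the same for every label distribution. We therefore absorb them into the proportionality, or more precisely into an additive constant independent of the labels.

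The proof thus reduces to counting pairs. The total number of ordered label-colliding pairs is $K_0=\sum_z n_z^2$. Writing $M_1$ for the number of colliding pairs with exactly one shared coordinate, we get
\[
\mathbb E\dot L\propto-\bigl[\kappa(0)(K_0-M_1-N)+\kappa(1)M_1+\kappa(2)N\bigr],
\]
with $N=P^2$. Regrouping gives $-\kappa(0)K_0-(\kappa(1)-\kappa(0))M_1$ plus a label-independent term $-(\kappa(2)-\kappa(0))P^2$, which can be absorbed. It then remains to show $M_1=K_1$.

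To compute $M_1$, fix a first coordinate $x$. The number of ordered pairs $(i,j)$ with $x_i=x_j=x$ and $z_i=z_j$ is $\sum_z (P\,\varphi^{(a)}(x,z))^2$, because each $x$ appears in exactly $P$ examples and $\mathbb P(Z=z\mid X=x)=n_{x,z}/P$. This count includes the $P$ diagonal pairs $i=j$. Subtracting these and summing over $x$ gives
\[
P^2\sum_{x,z}\varphi^{(a)}(x,z)^2-P^2=P^2(\|\varphi^{(a)}\|_F^2-1).
\]
The same computation applies to the second coordinate. Note that a pair with $x_i=x_j$ and $i\neq j$ automatically has $y_i\ne y_j$, so it shares exactly one coordinate and there is no double counting. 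Adding the two contributions yields $M_1=P^2(\|\varphi^{(a)}\|_2^2+\|\varphi^{(b)}\|_2^2-2)=K_1$, where $\|\cdot\|_2$ is read as the Frobenius norm of the matrix $\varphi$.

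The main obstacle is justifying the reduction of $\dot L$ to the kernel-weighted collision sum. This requires the earlier small-initialization expansion: $\dot L\approx-\|\nabla_W L\|^2$ dominated by the $W$-gradient, with $\mathrm{softmax}\approx \frac1P\mathbf 1$. The $-\frac1P\mathbf 1$ centering produces a term $\frac1P\|\sum_i\sigma_i\|^2$. This term is label-independent, so it must be argued to cancel or be absorbed as a constant, since the statement only claims correlation under varying label distributions. A second subtlety is that $\kappa$ is defined via expectations, so the statement holds for $\mathbb E\dot L$ rather than pathwise. We would state explicitly that the proportionality holds up to label-independent additive constants.
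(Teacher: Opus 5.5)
Your proposal is correct and is essentially the paper's own proof: both start from the collision-sum form of $\dot L$, weight each label-colliding ordered pair by $\kappa(0)$, $\kappa(1)$ or $\kappa(2)$ according to how many input positions it shares, set aside the label-independent diagonal term $(\kappa(2)-\kappa(0))N$, and count the one-shared-coordinate collisions as $\sum_{x,z}n_{x,z}(n_{x,z}-1)=P^2\|\varphi^{(a)}\|_2^2-P^2$ plus its $b$-analogue (your explicit handling of the label-independent centering term $\frac1P\|\sum_i\sigma_i\|^2$ is extra care the paper skips). One caution on your closing justification: for $c_1\neq0$ the embedding gradients are not subdominant, because $\mathbb E\bigl(\|\nabla_U L\|^2+\|\nabla_V L\|^2\bigr)\approx c_1^2Ds^2K_1/N^2$ (with $s^2$ the initialization variance) is the same order as the $(\kappa(1)-\kappa(0))K_1/N^2$ term, so the starting sum should be presented as the $W$-block contribution, which is all the paper's proof uses---including the $U,V$ blocks only enlarges the positive coefficient of $K_1$ and leaves the qualitative conclusion intact.
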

\emph{Proof.} See Appendix~\ref{app:thm-early-loss}.

For modular addition, conditioning on any fixed value of $a$ (or $b$), the label is uniform on $\Omega$. Hence, 
\[
\varphi^{(a)}=\varphi^{(b)}=\frac{1}{P}
\mathbf{1}_P\mathbf{1}_P^\top,
\qquad (K_0,K_1)=(P^3,0),
\]

For a fully random labeling, let $Z_{x,y}=\mathbb{P}_\pi(Z=y\mid X=x)$. The $P$ pairs with first coordinate $x$ receive independent labels, so
\[
Z_{x,y}\sim P^{-1}\operatorname{Binomial}(P,P^{-1}).
\]
According to the binomial distribution, we get 
\(\mathbb{E}[Z_{x,y}^2]=2P^{-2}-P^{-3}.\)
It further gives
\(
\mathbb{E}\left[K_0\right]=P^3+P^2-P,\mathbb{E}\left[K_1\right]=2(P^2-P)\), which are both greater than those in modular-addition case. Thus, uniform random labels can produce a faster early loss decrease than modular-addition labels. 

Next we consider the mixture of clean and noise. Without loss of generality, we assume sampling in complete $\Omega\times \Omega$ and dividing it into clean samples (modular addition) and noise (uniform random label).
The loss for one example $(x_i,y_i,z_i)$ can be regarded as
\[
\dot{L}_i \approx -\kappa(0)n_{z_i}^2-(\kappa(1)-\kappa(0))\Delta_i;\  \ \Delta_i:=\sum_{j=1,\ j\neq i}^{N_{\text{train}}} \mathbf 1_{z_i=z_j}(\mathbf 1_{x_i=x_j}+\mathbf 1_{y_i=y_j}).
\]
Here the coefficient $\Delta_i$ is regarded as $\Delta_{\mathrm{clean}}$ for a clean sample, or $\Delta_{\mathrm{noise}}$ for a noise. Taking expectations, the counts satisfy $\overline{\Delta_{\mathrm{clean}}}=2\alpha(1-P^{-1}),$
and
$\overline{\Delta_{\mathrm{noise}}}=2(1-P^{-1}).$
Consequently, $\overline{\Delta_{\mathrm{noise}}}-\overline{\Delta_{\mathrm{clean}}}=2(1-\alpha)(1-P^{-1})>0,$
so the loss of noise decreases more rapidly.
This prediction is consistent with the experimental observations and can also be applied to other forms of noise. In Appendix \ref{app:multiplication} and \ref{app:bijective}, we conducted additional experiments, thereby verifying that the theoretical predictions presented here are reliable.

\subsection{Extension of the Method: XOR as an example}
\begin{figure}[t]
  \centering
  \includegraphics[width=\linewidth]{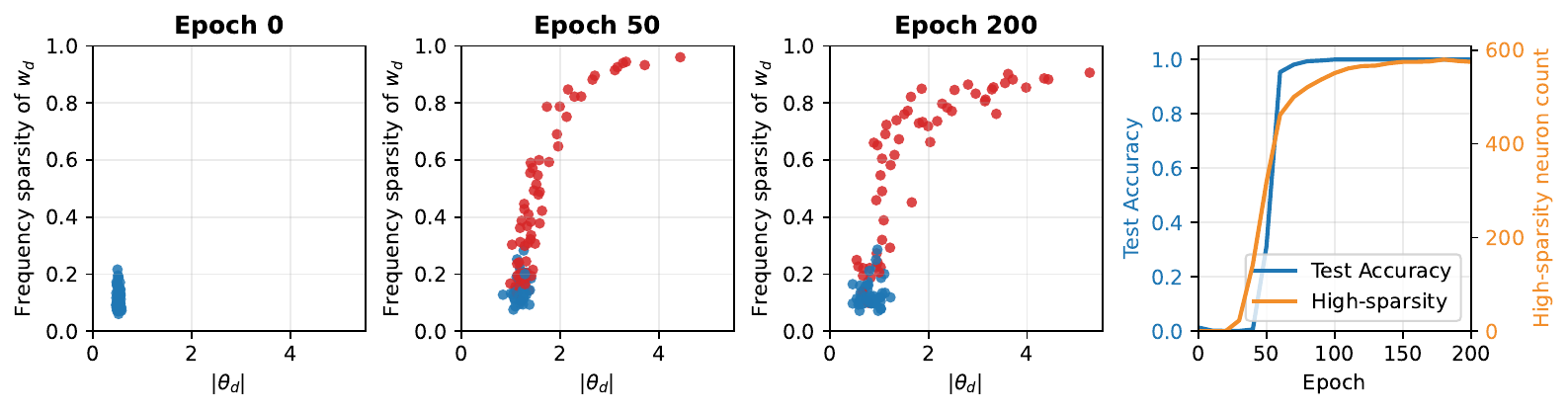}
  \caption{Hadamard frequency structure in the XOR experiment. For sampled neurons at epochs $0$, $50$, $200$, the horizontal axis is $\|\theta_d\|_2$ and the vertical axis is the sparsity of $w_d$. Red points denote neurons whose three dominant frequencies agree. The right panel reports test accuracy and the number of high-sparsity, frequency-matched neurons; high-sparsity means $\rho >0.7$.}
  \label{fig:xor_frequency}
\end{figure}
The Fourier structure observed in modular addition has previously been identified and functionally validated from trained networks. However, these results primarily provide a mechanistic description of the circuit after it has formed, rather than a general forward description to predict the representation basis associated with a new algebraic operation.

Our probability-signature analysis suggests such a prescription. In the small-initialization regime, the leading parameter dynamic is governed by linear operators constructed from conditional statistics of the training distribution. When these probability-signature operators admit a common eigen basis, transforming the parameters into this basis approximately decouples the leading-order dynamics into independent modes. Thus, the relevant representation basis need not be assumed to be the Fourier basis in advance, instead it can be predicted by finding the common eigenvectors of the joint signature.

This observation provides a natural route for transferring the analysis beyond modular addition. Starting from the Fourier transform, we notice that relevant probability signatures can be regarded as the operators on the Abelian group, whose common eigenvectors are given by the corresponding group characters whenever the operators are simultaneously diagonalizable.

For example, consider the task on $\Omega=\{0,\ldots,63\}$, with $F(a,b)=a\operatorname{xor}b$. Its joint signature is
\[
\Phi^{(a)}_x=\sum_{y,z\in \Omega}\mathbb{P}(Z=z,Y=y\mid X=x)e_y\otimes e_z=\frac{1}{64}\sum_{y\in\Omega}\boldsymbol E_{(y,x\operatorname{xor}y)}.
\]
Hadamard basis simultaneously diagonalize these operators: there is an orthogonal basis $H=[H_0,H_1,\ldots,H_{63}]$ such that
$H^{-1}\Phi^{(a)}_x H$
is diagonal for every $x\in \Omega$. The properties of $H$ can be referred to in Appendix~\ref{app:hadamard}. 
Therefore, we claim that after training on XOR, the weights will evolve into a shape similar to the Hadamard basis.

\paragraph{Validation.} We write $u=\sum_{k=0}^{63}E_{u,k}H_k$, and define its sparsity by
\[
\rho_u=\frac{\max_k|E_{u,k}|^2}{\sum_k|E_{u,k}|^2}.
\]
The existence of this orthogonal basis allows the above analysis to be applied to XOR. Based on the redefined frequency sparsity, in Figure~\ref{fig:xor_frequency} we present the observations regarding the frequency matching, as well as the high correlation between frequency sparsity and norm. The observed birth and growth of high-sparsity neurons supports our claim.

\section{Conclusion and Discussion}
Using the probability-signature framework, we provide a unified data-side account of three phenomena. First, for modular addition, the conditional-distribution operators induced by the training data recover the Fourier basis as their common eigenbasis and yield approximately decoupled leading-order dynamics, thereby accounting for the emergence of dominant frequencies, frequency matching, and phase alignment. Second, under uniformly random label noise, conditional label collisions among examples sharing an input coordinate explain why corrupted examples can exhibit a faster initial loss decrease than clean examples. Third, the same operator viewpoint suggests a route for predicting the special structures that neurons may develop on datasets generated by other binary operations: for XOR, the joint signature identifies the Hadamard basis as the relevant candidate coordinate system, in agreement with the weight patterns observed after training. \textbf{Limitations.} We approximate the empirical gradient flow using idealized dynamics derived from discrete, full-distribution probability signatures, together with finite-sample and label-noise corrections. Moreover, our dynamical analysis concerns the dominant gradient terms within a particular training regime; we do not claim that this approximation remains valid throughout the entire training process. Further discussion of the scope and empirical accuracy of this approximation is provided in Appendix~\ref{app:gradient_flow_approximation}.

\bibliography{reference}
\bibliographystyle{reference}

\newpage
\appendix

\section{Model and training details}\label{app:train}

\begin{figure}[t]
  \centering
  \includegraphics[width=\linewidth]{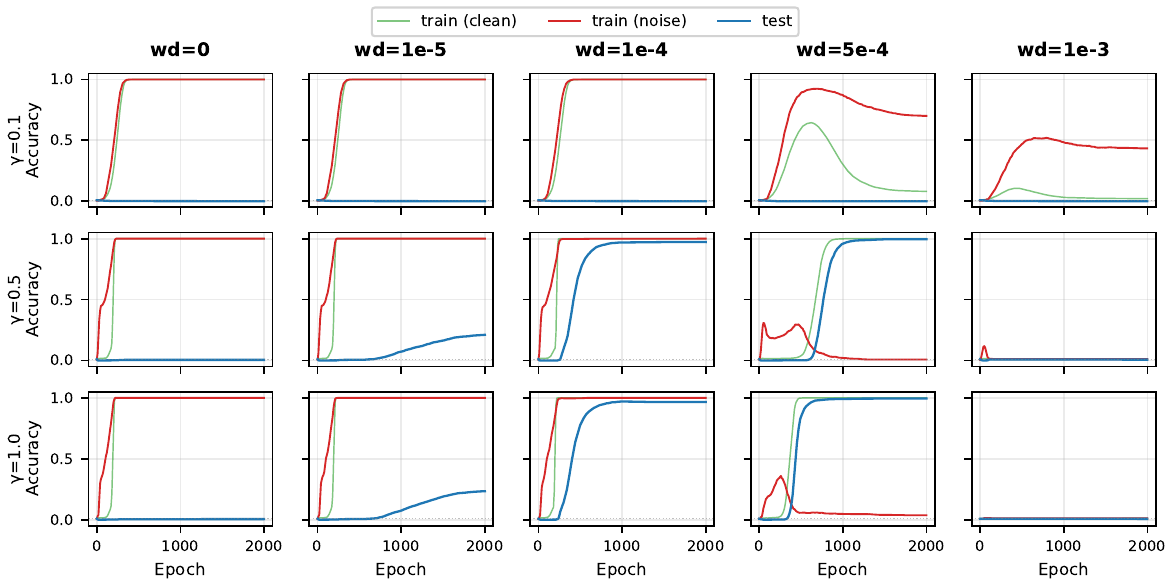}
  \caption{Accuracy curves using Adam, with weight decay $\lambda=0,10^{-5},5\cdot 10^{-4},10^{-4},10^{-3}$ and initialization scale $\gamma=0.1,0.5,1.0$. }
  \label{fig:acc_adam}
\end{figure}

\begin{figure}[t]
  \centering
  \includegraphics[width=\linewidth]{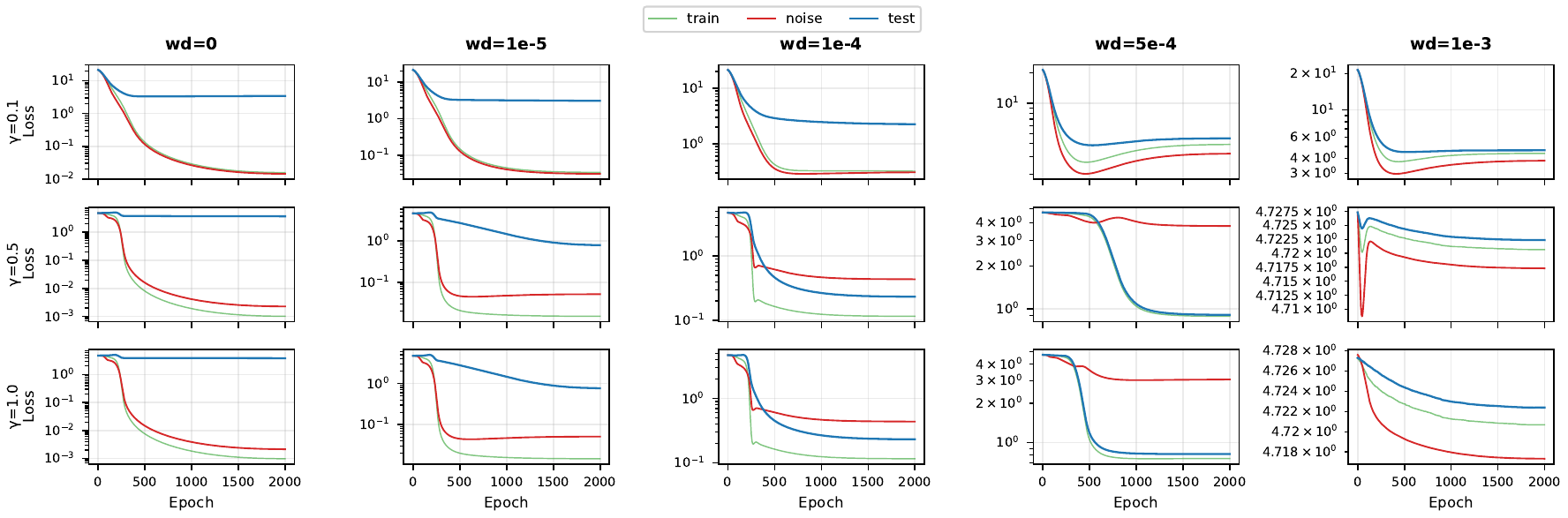}
  \caption{Loss curves using Adam.}
  \label{fig:loss_adam}
\end{figure}

The neural network structure of this article can be written as

$$
f(a, b) = W\sigma(U e_a + V e_b)=\sum_{d=0}^{D-1}W^d(U_{d,a}+V_{d,b}),
$$

Here, $U$ and $V$ are input embedding matrices in $\mathbb{R}^{D \times P}$, and $W$ is the output matrix in $\mathbb{R}^{P \times D}$, with $D$ representing the width of the hidden layer ($D=2000$, $P=113$). The output probability can be expressed as

$$
p(X_i)_{z_i} = \frac{\exp(f(a, b)_{z_i})}{\sum_{k=0}^{P-1} \exp(f(a, b)_k)}
$$

The loss of the training data is given by the cross-entropy:

$$
L_{\text{CE}} = \frac{1}{N}\sum_{i=1}^N -\log p(X_i)_{z_i}
$$

The total loss after adding weight decay is:

$$
L = L_{\text{CE}} + \frac{\lambda}{2}\left(\|U\|_F^2 + \|V\|_F^2 + \|W\|_F^2\right)
$$

The learning rate strategy we employ is warmup + cosine decay:

$$
\eta_t = \begin{cases}
\eta \cdot \frac{t}{T_{\text{warmup}}}, & t < T_{\text{warmup}} \\
\eta \cdot \frac{1}{2}\left(1+\eta_0 + (1-\eta_0)\cos\frac{(t - T_{\text{warmup}})\pi}{T_{\text{optim}} - T_{\text{warmup}}}\right), & t \geq T_{\text{warmup}}
\end{cases}
$$

\begin{figure}[t]
  \centering
  \includegraphics[width=\linewidth]{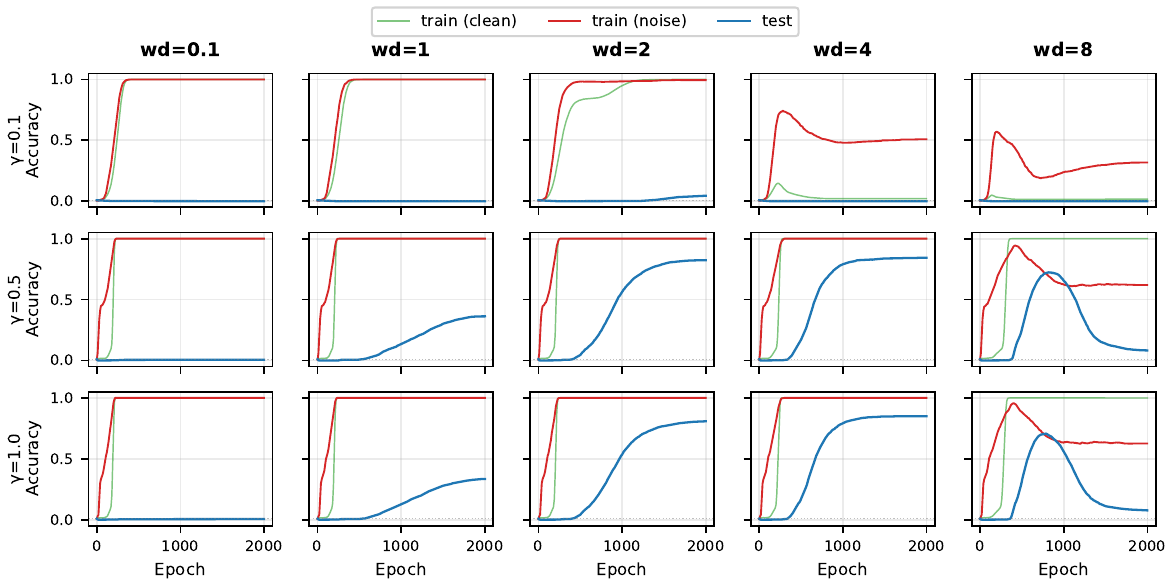}
  \caption{Accuracy curves using AdamW.}
  \label{fig:acc_adamw}
\end{figure}

\begin{figure}[t]
  \centering
  \includegraphics[width=\linewidth]{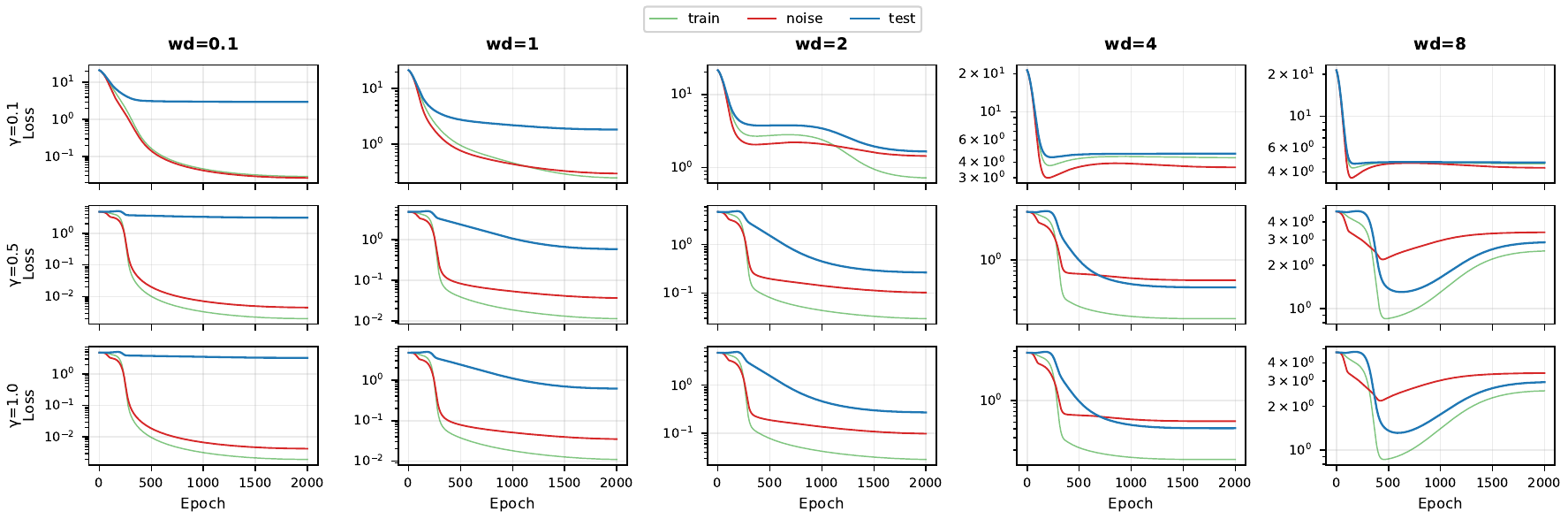}
  \caption{Loss curves using AdamW.}
  \label{fig:loss_adamw}
\end{figure}

The minimum learning rate $\eta_0 = 10^{-5}$, and the initial learning rate $\eta = 10^{-3}$ if using the Adam/AdamW optimizer. Experiments using Adam can generally be reproduced under the AdamW optimizer with about 1000x weight decay. Figure \ref{fig:acc_adam},\ref{fig:loss_adam} compares the specific accuracy or loss curves with Adam under different weight decay and initialization. If the weight decay is too small or absent, there will be no generalization. If the weight decay is too large, it may cause the training to fail to converge, and at this point, all parameters will be compressed to zero. With an appropriate weight decay, using a smaller initial value can shorten the training convergence time. Moreover, check Figure \ref{fig:acc_adamw},\ref{fig:loss_adamw} for AdamW. Compared with Adam, greater weight decay needs to be used to achieve a similar effect, and we find that in this problem the Adam optimizer can provide better regularization, that is, a reduction of noise.

\section{Discussion on Activation Functions}\label{app:activation}

\begin{figure}[t]
  \centering
  \includegraphics[width=\linewidth]{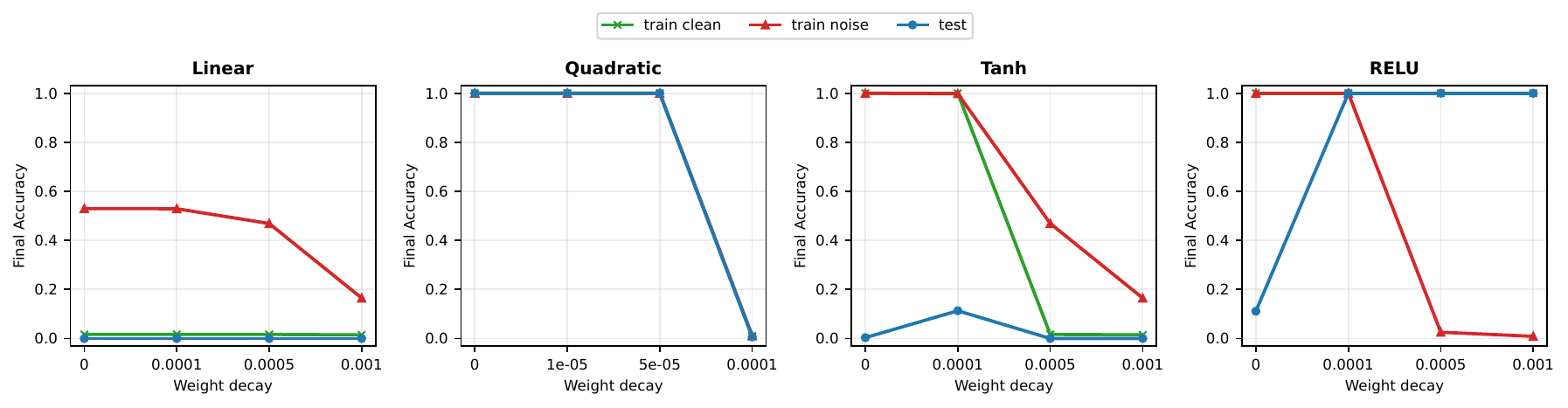}
  \caption{(a)--(d): Final accuracy using activation functions $\sigma(x)=x$ (linear), $\sigma(x)=x^2$ (quadratic), $\sigma(x)=\tanh(x)$, or $\sigma(x)=\max(0,x)$, under different weight decay. }
  \label{fig:activation}
\end{figure}

We use $\sigma(x)=\operatorname{GELU}(x)$ as the activation function in the main text, which is smooth near zero. The analytical method in our text is based on the expansion of the activation function, hence we conducted the same experiments with other activation functions, such as $\sigma(x)=x$ (linear), $\sigma(x)=x^2$, $\sigma(x)=\tanh(x)$, and $\sigma(x)=\operatorname{ReLU}(x)$. The experimental results are presented in Figure \ref{fig:activation}. From the figure, it can be seen that using the linear activation is almost impossible to learn well on the training set. Using $\tanh(x)$ as the activation function results in very low test generalization. However, using the quadratic or ReLU activation function can achieve great test generalization under an appropriate weight decay. 

Let's review the mathematical derivation of the probability signature presented in the main text, we use the Taylor expansion of the activation function,
\[
\sigma(h)=0+\sigma'(0)h+\frac{1}{2}\sigma''(0)h^2+o(h^2),
\]
to derive the probability signature. We can categorize all the expanded terms into 1-order or 2-order terms, based on whether they come from the 1st-order expansion or the 2nd-order expansion. In Theorem \ref{thm:dominance}, the 1-order terms are eliminated because of the assumption of uniform marginal distribution. As a result, the second-order derivative term is the key to this problem.

The theoretical explanation matches our observation of poor generalization with linear/Tanh. Because $\tanh(x) = x - x^3/3 + \cdots$ does not contain the second-order derivative term. Meanwhile, the quadratic activation function contains the second-order derivative term, which does well in the test case. Besides, the success of RELU is not in contradiction to the analysis, because it is not differentiable near 0. We will discuss the RELU activation function in detail in Appendix \ref{app:relu}.

\section{Experimental Details}
\subsection{Gradient Flow Approximation}
\label{app:gradient_flow_approximation}
In this section, we present the experimental results with the aim of verifying the rationality of the approximation methods employed in Theorem \ref{thm:dominance} and Theorem \ref{thm:convolution}.

\begin{figure}[t]
  \centering
  \includegraphics[width=\linewidth]{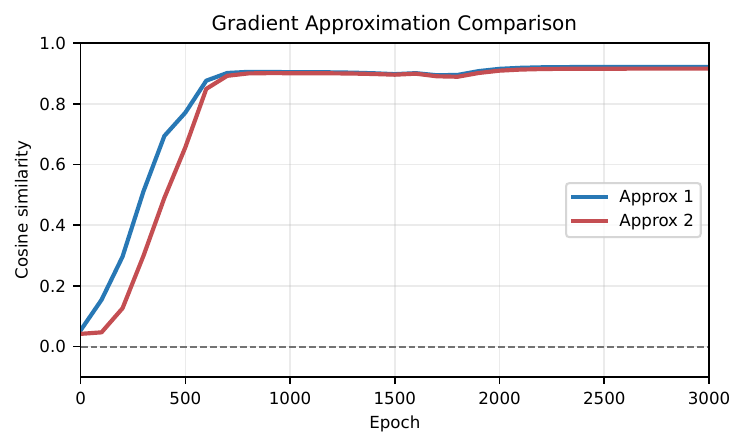}
  \caption{ Cosine similarity between the approximated gradients and the actual gradients. The blue curve uses the main term from Theorem \ref{thm:dominance} and the red one uses the main term from Theorem \ref{thm:convolution}.}
  \label{fig:gradient_flow_approximation}
\end{figure}

By Theorem \ref{thm:dominance}, we can approximate the gradient flow of $U^x$ as $\dot{U^x}\approx -\lambda U^x+c\cdot \operatorname{diag}(V\Phi^{(a)}_x W)$, where $\Phi^{(a)}_x$ is the probability signature of the actual training dataset. We extract its main term, 
\[
\hat\nabla_{U^x}L_1 := \operatorname{diag}(V\Phi^{(a)}_x W),
\]
as an approximation for the cross-entropy gradient. Flatten this vector, we get $\hat\nabla_{\theta}L_1$, which is an approximation for the cross-entropy gradient $\nabla_{\theta}L$. We compare the two gradients by cosine similarity 
\[
\cos \frac{\langle \hat\nabla_{\theta}L_1, \nabla_{\theta}L \rangle}{\|\hat\nabla_{\theta}L_1\|_2 \cdot \| \nabla_{\theta}L \|_2},
\]
as Approx 1 in Figure \ref{fig:gradient_flow_approximation}. 

Similarly, by Theorem \ref{thm:convolution}, we can approximate the gradient flow of $U^x$ as $\dot{U^x}\approx -\lambda U^x+c\cdot \operatorname{diag}(V\Phi^{+}_x W)$, where $\Phi^{+}_x$ is the probability signature of the clean modular addition, it is, 
\[
\Phi^{+}_x = \frac{1}{P} \begin{pmatrix}
 & 1 & & & \\
 & & 1 & & \\
 & & & \ddots & \\
 & & & & 1 \\
 1 & & & & 
\end{pmatrix}^x
\]
We also extract and flatten its main term as $\hat\nabla_{\theta}L_2$, comparing which with the cross-entropy gradient $\nabla_{\theta}L$ by cosine similarity, as Approx 2 in Figure \ref{fig:gradient_flow_approximation}. The experimental results show that both the approximated gradients and the actual gradients have a high similarity of 90\%. This verifies our theoretical analysis. Meanwhile, the control experiments using $\sigma(x)=x^2$ can serve as an ablation study, which demonstrates that using only the 2-order main terms is sufficient for achieving test generalization in the modular-addition task.

\textbf{Discussion on the small initialization assumption.}
In the main text, the approximation of the gradient is required to be expanded within a small parameter region. At the same time, we claim that the neurons we are studying can always grow. This note will clarify that this is not in contradiction.

First of all, we acknowledge that there are higher-order terms in the gradient flow equation such as $\delta=O(h^3)$. Our theoretical derivation does not assume that these higher-order terms should be omitted. We focus on the dominant part. There exists a scale $h_0$. When the parameter is less than $h_0$, the second-order term becomes dominant, while when the parameter is greater than $h_0$, the higher-order terms become dominant. It is noted that when weight decay exists, there is a non-high-order term of $-\lambda \theta$ in the gradient flow equation. Therefore, an appropriate $\lambda$ will position the decay term between the above, and when the scale of the parameters expands beyond the region dominated by the second-order, the weight decay becomes the main term, thereby preventing the scale of the parameters from continuing to grow.

\subsection{Frequency Selectivity }
\label{app:frequency}

\begin{figure}[t]
  \centering
  \includegraphics[width=\linewidth]{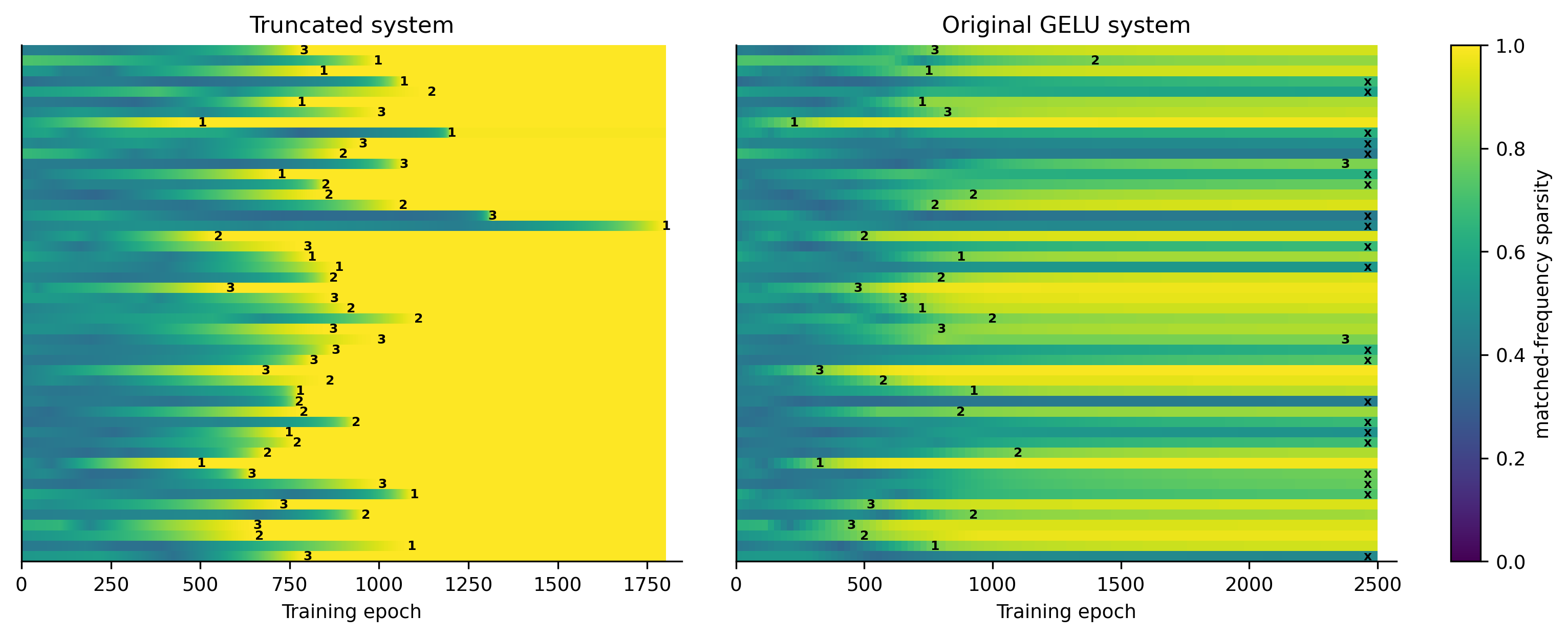}
  \caption{ Neuron-wise frequency concentration in the truncated and original
  GELU systems. Heatmaps show the evolution of matched-frequency sparsity for
  individual neurons during training. Rows correspond to neurons, and columns
  represent training epochs. Numbers indicate the final winning Fourier mode,
  whereas 'x' marks neurons whose winner had not stabilized by the specified
  threshold. Higher values indicate stronger concentration onto a matched
  nonzero frequency. }
  \label{fig:freq_predict}
\end{figure}

We used complete addition tables with $P=7,D=50$, GELU activation, Gaussian scale $D^{-1/2}$. Both models started from identical initialized parameters. The exact update was
\[
\theta_{k+1}=\theta_k+0.2(-\nabla L(\theta_k)-10^{-4}\theta_k),
\]
and the truncated update replaced $-\nabla L$ by the field in Theorem \ref{thm:dominance} with $c=\sigma''(0)/7$ and the same weight decay $\lambda=0.0001$.

Recording every 25 epochs, the experiment tracked the evolution of the dominant frequency of each neuron, in Figure \ref{fig:freq_predict}. We found that approximately 50\% of the neurons developed a dominant frequency, and among this subset, a proportion of 0.93 converged to the same dominant frequency. Because shared initialization parameters were used, these neurons gave rise to the same frequency selection, which is consistent with our argument in Proposition \ref{prop:freq}.

\section{Discussion on the ReLU activation}\label{app:relu}
In this section, we will discuss the ReLU activation function in detail. The ReLU activation function is not differentiable at 0, which makes it difficult to analyze using Taylor expansion. However, we can still analyze the behavior of the network with ReLU activation by considering the piecewise linear nature of ReLU.

\subsection{Fourier-mode is able to do modular addition under RELU}
We give a note on how 2L-MLP with ReLU activation do well in modular addition. 
\begin{proposition}[A sufficient Fourier condition for modular addition]
Let $P$ be an odd prime, and let $w_d\in\mathbb Z_P\setminus\{0\},\, 0\le d<D$ be prescribed frequencies. For each hidden unit $d$, let
$\phi_{1,d},\phi_{2,d}$ be mutually independent random variables
distributed uniformly on $[0,2\pi)$. If the parameters are set to
$
U_{d,x}
=
\cos\!\left(\frac{2\pi w_d x}{P}+\phi_{1,d}\right),
\,
V_{d,x}
=
\cos\!\left(\frac{2\pi w_d x}{P}+\phi_{2,d}\right),
\,
W_{c,d}
=
\cos\!\left(
\frac{2\pi w_d c}{P}
+\phi_{1,d}+\phi_{2,d}
\right).
$
then the prediction of the neural network is $(a+b) \mod P$ with probability tending to 1 as $D\to\infty$.
\end{proposition}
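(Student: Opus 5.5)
The plan is to compute the network output $f(a,b)_c=\sum_d W_{c,d}\,\mathrm{ReLU}(U_{d,a}+V_{d,b})$ and show that, after normalizing by $D$, it concentrates around a deterministic function of $c$ that is uniquely maximized at $c=(a+b)\bmod P$. Write $\theta_a=2\pi w_d a/P$ and $\theta_b=2\pi w_d b/P$. Sum-to-product gives
\[
U_{d,a}+V_{d,b}=2\cos\!\Big(\tfrac{\theta_a-\theta_b+\phi_{1,d}-\phi_{2,d}}{2}\Big)\cos\!\Big(\tfrac{\theta_a+\theta_b+\phi_{1,d}+\phi_{2,d}}{2}\Big).
\]
The change of variables $\alpha=\phi_{1,d}+\phi_{2,d}$, $\beta=\phi_{1,d}-\phi_{2,d}$ is measure preserving on the torus (up to the factor-2 covering), and under it $\alpha$ and $\beta$ are each uniform. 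I will treat them as independent uniform phases. This step needs care, but the pair $(\alpha,\beta)\bmod 2\pi$ is uniform on a lattice quotient, which suffices for the expectations below. Set $A=\alpha+\theta_a+\theta_b$. Then the $d$-th summand equals
\[
\cos\!\Big(A+\tfrac{2\pi w_d(c-a-b)}{P}\Big)\,\mathrm{ReLU}\!\Big(2\cos\tfrac{B}{2}\,\cos\tfrac{A}{2}\Big),\qquad B=\beta+\theta_a-\theta_b .
\]

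Next I take the expectation over the phases, with $w_d$ fixed. Write $m=c-a-b$ and $\delta=2\pi w_d m/P$. Independence of $B$ gives the factor $\mathbb E|\cos(B/2)|$ times a conditional expectation over $A$. Because ReLU is positively homogeneous, the integrand reduces to $|\cos(B/2)|\,\mathrm{ReLU}(\pm 2\cos(A/2))\cos(A+\delta)$. Expand $\cos(A+\delta)=\cos A\cos\delta-\sin A\sin\delta$. The $\sin A$ term integrates to zero against the even function $\mathrm{ReLU}(2\cos(A/2))$ after symmetrizing $A\mapsto -A$. The $\cos A$ term gives a constant
\[
\kappa=\mathbb E\big[\cos A\;\mathrm{ReLU}(2\cos(A/2))\big]\,\mathbb E|\cos(B/2)|.
\]
A direct one-dimensional integral should show $\kappa>0$: $\cos A=2\cos^2(A/2)-1$, weighted by the positive part of $\cos(A/2)$, gives $\int_{-\pi}^{\pi}(2\cos^3 s-\cos s)\,ds/(2\pi)\cdot(\text{const})=\tfrac{1}{3\pi}\cdot(\text{const})>0$. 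Hence $\mathbb E[\text{summand}_d]=\kappa\cos(2\pi w_d m/P)$.

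The conclusion follows by averaging over $d$. For $c=a+b$ each term contributes $\kappa$. For $m\neq 0$, the assumption that $P$ is prime and $w_d\neq 0$ gives $w_d m\not\equiv 0$, so $\cos(2\pi w_d m/P)\le\cos(2\pi/P)<1$. The mean therefore has a gap of at least $\kappa(1-\cos(2\pi/P))>0$ between the correct label and every wrong one. Each summand is bounded by $2$ and the summands are independent across $d$, so Hoeffding's inequality gives deviation of $\frac1D f(a,b)_c$ from its mean exceeding half the gap with probability $\le 2e^{-cD}$. A union bound over the $P$ labels, and over the $P^2$ input pairs for uniform correctness, then shows $\arg\max_c f(a,b)_c=(a+b)\bmod P$ with probability $\to1$.

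I expect the main obstacle to be the phase bookkeeping in the second step. The half-angle factor $\cos(A/2)$ is not $2\pi$-periodic in $\alpha$, so the ReLU sign and the independence of $A$ and $B$ must be handled carefully, and the symmetry argument that kills the $\sin\delta$ term has to survive this. I would do that computation explicitly on $(\phi_1,\phi_2)\in[0,2\pi)^2$, splitting by the signs of the two cosine factors.
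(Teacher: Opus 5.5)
Your proposal is correct and follows essentially the same route as the paper. The shared steps are the half-angle substitution $s=A/2$, $t=B/2$ (the paper's $s=(\alpha_d+\beta_d)/2$, $t=(\alpha_d-\beta_d)/2$), the mean $\kappa\cos\bigl(2\pi w_d(c-a-b)/P\bigr)$ with $\kappa=\frac{4}{3\pi^2}$, the primality gap $1-\cos(2\pi/P)$, and Hoeffding with bound $2$ plus a union bound.

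Two small corrections. The bookkeeping you flag is settled the way the paper's double integral implicitly does it: take $s,t$ independent and uniform on a full period $[0,2\pi)$, and note that $(s+t,s-t)\bmod 2\pi$ is uniform on the torus because this map is a surjective two-to-one homomorphism. Then the $2\pi$-periodic summand $\cos(\alpha_d+\beta_d+\delta)[\cos\alpha_d+\cos\beta_d]_+$ has the same expectation as $\cos(2s+\delta)[2\cos s\cos t]_+$, whose factors are genuinely independent. By contrast, taking $\alpha,\beta$ independent on $[0,2\pi)$ leaves $A/2$ on only half a period, so the $\pm$ sign cannot be dropped and the value is wrong in general. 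Also, your integral of $2\cos^3 s-\cos s$ should run over $[-\pi/2,\pi/2]$, where the positive part is supported, not over $[-\pi,\pi]$.
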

\begin{proof}
Fix $a,b\in\mathbb Z_P$, and write
\[
c_\star=(a+b)\bmod P.
\]
For each hidden unit $d$ and output coordinate $c$, define
\[
X_{d,c}
:=
W_{c,d}\max\{0,U_{d,a}+V_{d,b}\}.
\]
Then
\[
f_c^{(D)}(a,b)=\frac1D\sum_{d=1}^D X_{d,c}.
\]
Since $|W_{c,d}|\leq 1$ and
\[
0\leq \max\{0,U_{d,a}+V_{d,b}\}\leq 2,
\]
we have
\[
-2\leq X_{d,c}\leq 2.
\]

We first compute the expectation of a single-unit contribution. Set
\[
\alpha_{d}
=
\frac{2\pi w_d a}{P}+\phi_{1,d},
\qquad
\beta_{d}
=
\frac{2\pi w_d b}{P}+\phi_{2,d},
\]
and let
\[
\delta_{d,c}
=
\frac{2\pi w_d(c-a-b)}{P}.
\]
Because
\[
\phi_{1,d}+\phi_{2,d}
=
\alpha_d+\beta_d-\frac{2\pi w_d(a+b)}{P},
\]
we obtain
\[
W_{c,d}
=
\cos(\alpha_d+\beta_d+\delta_{d,c}).
\]
Thus,
\[
\mathbb E[X_{d,c}]
=
\mathbb E\left[
\cos(\alpha_d+\beta_d+\delta_{d,c})
\max\{0,\cos\alpha_d+\cos\beta_d\}
\right].
\]

Since $\alpha_d$ and $\beta_d$ are independent and uniformly distributed on
$[0,2\pi)$, we may set
\[
s=\frac{\alpha_d+\beta_d}{2},
\qquad
t=\frac{\alpha_d-\beta_d}{2}.
\]
Using
\[
\cos\alpha_d+\cos\beta_d
=
2\cos s\cos t,
\]
we have
\[
\mathbb E[X_{d,c}]
=
\frac{1}{(2\pi)^2}
\int_0^{2\pi}\int_0^{2\pi}
\cos(2s+\delta_{d,c})
\,[2\cos s\cos t]_+
\,dt\,ds.
\]
For fixed $s$,
\[
\int_0^{2\pi}[2\cos s\cos t]_+\,dt
=
4|\cos s|.
\]
Consequently,
\begin{align*}
\mathbb E[X_{d,c}]
&=
\frac{1}{\pi^2}
\int_0^{2\pi}
|\cos s|\cos(2s+\delta_{d,c})\,ds\\
&=
\frac{\cos(\delta_{d,c})}{\pi^2}
\int_0^{2\pi}|\cos s|\cos(2s)\,ds\\
&=
\frac{4}{3\pi^2}
\cos(\delta_{d,c}).
\end{align*}
In the second equality, the sine term vanishes by symmetry, and in the last equality we used
\[
\int_0^{2\pi}|\cos s|\cos(2s)\,ds=\frac43.
\]
Therefore,
\[
\mathbb E[X_{d,c}]
=
\frac{4}{3\pi^2}
\cos\left(
\frac{2\pi w_d(c-a-b)}{P}
\right).
\]

For the target coordinate $c_\star$, we have
\[
c_\star-a-b\equiv 0\pmod P,
\]
and hence
\[
\mathbb E[X_{d,c_\star}]
=
\frac{4}{3\pi^2}.
\]
Now consider any $c\neq c_\star$. Since $P$ is prime,
\[
w_d\not\equiv 0\pmod P,
\qquad
c-a-b\not\equiv 0\pmod P
\]
imply
\[
w_d(c-a-b)\not\equiv 0\pmod P.
\]
Define
\[
\rho_P
:=
\max_{r\in\mathbb Z_P\setminus\{0\}}
\cos\left(\frac{2\pi r}{P}\right).
\]
Because $\mathbb Z_P\setminus\{0\}$ is finite and none of the corresponding angles is congruent to $0$ modulo $2\pi$,
\[
\rho_P<1.
\]
It follows that, for every $d$ and every $c\neq c_\star$,
\[
\mathbb E[X_{d,c}]
\leq
\frac{4}{3\pi^2}\rho_P.
\]
Hence, with
\[
\Delta_P
:=
\frac{4}{3\pi^2}(1-\rho_P)>0,
\]
we have
\[
\mathbb E[X_{d,c_\star}]
-
\mathbb E[X_{d,c}]
\geq \Delta_P
\qquad
\text{for all }d
\text{ and all }c\neq c_\star.
\]
Averaging over $d$ gives
\[
\mathbb E[f_{c_\star}^{(D)}(a,b)]
-
\mathbb E[f_c^{(D)}(a,b)]
\geq \Delta_P
\qquad
\text{for every }c\neq c_\star.
\]

It remains to control the fluctuations around these expectations. Since the random phases are mutually independent across hidden units, the variables
$\{X_{d,c}\}_{d=1}^D$ are independent for each fixed $c$. Moreover,
$X_{d,c}\in[-2,2]$. Therefore, Hoeffding's inequality yields, for every
$\varepsilon>0$,
\[
\mathbb P\left(
\left|
f_c^{(D)}(a,b)-\mathbb E[f_c^{(D)}(a,b)]
\right|
\geq \varepsilon
\right)
\leq
2\exp\left(-\frac{D\varepsilon^2}{8}\right).
\]
Taking $\varepsilon=\Delta_P/3$ and applying the union bound over the
$P$ output coordinates, we obtain
\[
\mathbb P\left(
\max_{c\in\mathbb Z_P}
\left|
f_c^{(D)}(a,b)-\mathbb E[f_c^{(D)}(a,b)]
\right|
\geq \frac{\Delta_P}{3}
\right)
\leq
2P\exp\left(-\frac{D\Delta_P^2}{72}\right).
\]
On the complementary event, for every $c\neq c_\star$,
\begin{align*}
f_{c_\star}^{(D)}(a,b)-f_c^{(D)}(a,b)
&\geq
\left(
\mathbb E[f_{c_\star}^{(D)}(a,b)]
-
\mathbb E[f_c^{(D)}(a,b)]
\right)
-\frac{2\Delta_P}{3}\\
&\geq
\frac{\Delta_P}{3}>0.
\end{align*}
Thus, on this event, $c_\star$ is the unique maximizing output coordinate. Consequently,
\[
\mathbb P\left(
\arg\max_{c\in\mathbb Z_P}
f_c^{(D)}(a,b)
=
c_\star
\right)
\geq
1-
2P\exp\left(-\frac{D\Delta_P^2}{72}\right).
\]
Since the right-hand side converges to $1$ as $D\to\infty$, we conclude that
\[
\mathbb P\!\left(
\arg\max_{c\in\mathbb Z_P}f_c^{(D)}(a,b)
=
(a+b)\bmod P
\right)
\longrightarrow 1.
\]
\end{proof}

\subsection{Experiment Details of ReLU activation}
We conducted experiments using the ReLU activation function, and the results are shown in Figures \ref{fig:drive_decay_selectivity_relu} and \ref{fig:frequency_phase_relu}. The experimental setup is consistent with that of the GELU activation function. The results indicate that the ReLU activation function exhibits similar behavior to GELU in terms of weight decay selectivity or frequency-phase scatter evolution. This suggests that the observed phenomena are not specific to those smooth activation function but rather reflect more general properties of neural network training dynamics.

\begin{figure}[t]
  \centering
  \includegraphics[width=\linewidth]{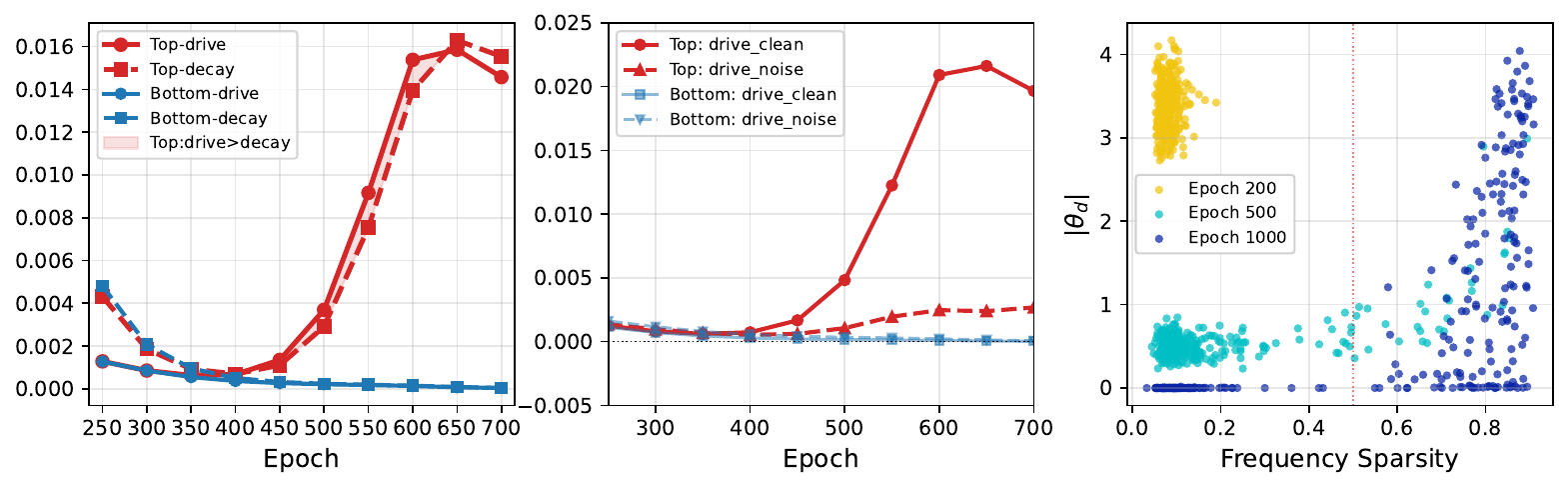}
  \caption{ Experiment of Figure \ref{fig:drive_decay_selectivity} with ReLU. }
  \label{fig:drive_decay_selectivity_relu}
\end{figure}

\begin{figure}[t]
  \centering
  \includegraphics[width=\linewidth]{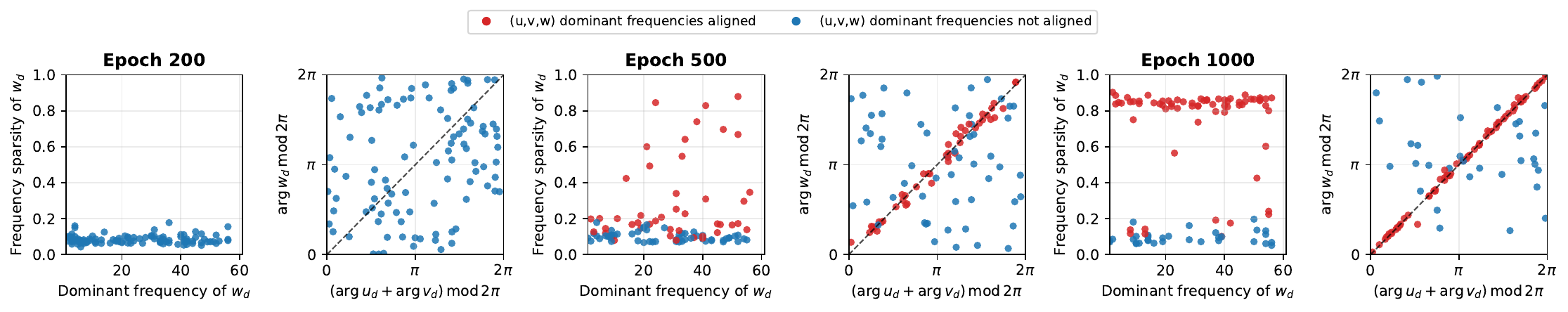}
  \caption{ Experiment of Figure \ref{fig:frequency_phase} with ReLU. }
  \label{fig:frequency_phase_relu}
\end{figure}

\subsection{Early Memorization}

\begin{figure}[t]
  \centering
  \includegraphics[width=\linewidth]{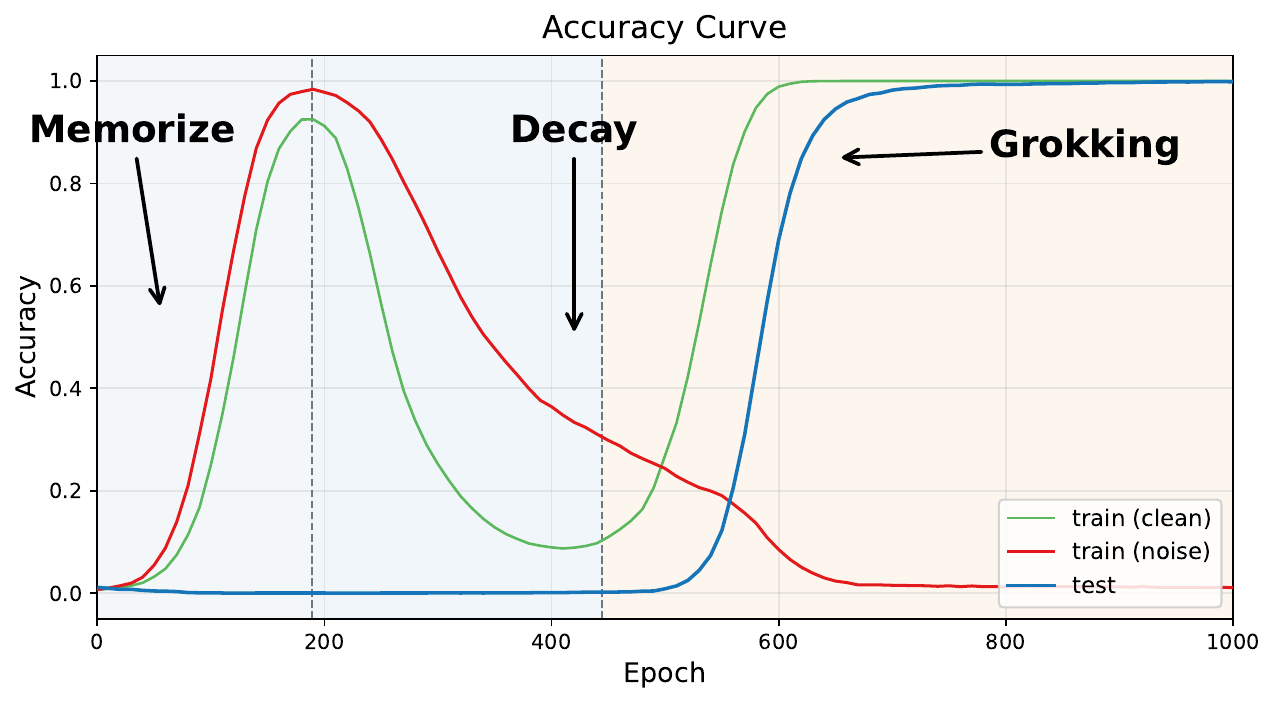}
  \caption{Accuracy curve under RELU activation and $(\lambda,\gamma)=(0.001,0.1)$, with 30\% noise. }
  \label{fig:RELU_curve}
\end{figure}

Current research~\cite{zhang2017understandingdeeplearningrequires, jacot2020ntk} have already provided a thorough discussion on the early memory behavior of neural networks. When use $\sigma(x)=\max(0,x)$, we observe clear traces of memory at $(\lambda,\gamma)=(0.001,0.1)$. The training curve from Figure \ref{fig:RELU_curve} shows it reaches high accuracy on both clean and noise dataset first and then decreases due to the weight decay. Here, we merely use the gradient flow to offer a more intuitive understanding.

Consider a single example $(a,b,y)$ and let $h=Ue_a+Ve_b$ and $p=\operatorname{softmax}(f(a,b))$. By the chain rule,
\[
\frac{\partial L}{\partial W_{\nu,d}}=(p-e_y)_\nu\,\sigma(h)_d+\lambda W_{\nu,d},\qquad
\frac{\partial L}{\partial U_{d,x}}=\mathbf{1}_{\{x=a\wedge h_d>0\}}[W^\top(p-e_y)]_d+\lambda U_{d,x}.
\]
Thus the gradient flow for the output weights is
\[
\frac{\mathrm{d}W_{\nu,d}}{\mathrm{d}t}=(e_y-p)_\nu\sigma(h)_d-\lambda W_{\nu,d}.
\]
Near initialization, the softmax expansion gives
\[
\operatorname{softmax}(f)=\frac{1}{P}\mathbf{1}_P+\frac{1}{P}(f-\overline f)+O(\lVert f\rVert^2/P),
\]
so the correct output coordinate is amplified while the others are weakly suppressed. In particular, the active coordinates satisfy $\frac{\mathrm{d}W_y^\top}{\mathrm{d}t}=\sigma(h)-\lambda W_y^\top,$ 
hence the weights will increase in those dimensions where $\sigma(h) > 0$. For the example in Figure~\ref{fig:interpolation_memscore}(a), $\sigma(x)=\max (0,x)$, only $d=3,5$ are active in the case, namely those satisfying $h_d=U_{d,a}+V_{d,b}>0$. This is only a toy model of 1 training sample. It behaves like an interpolation, storing the data onto these random dimensions.

\begin{figure}[t]
  \centering
  \includegraphics[width=\linewidth]{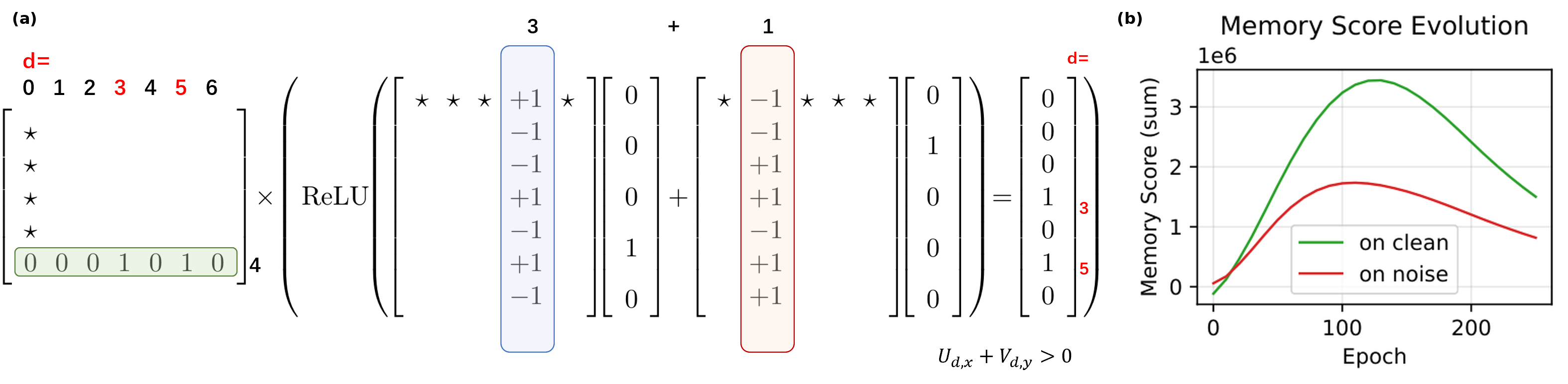}
  \caption{Example-wise interpolation and the memory score. (a). For the illustrative input $3+1=4$ with $(D,P)=(7,5),\sigma(x)=\max(0,x)$, the activation feature $K=e_3+e_5$ represents the initially active hidden coordinates; the displayed $\pm1$ values denote random initialization signs rather than fixed parameter values. (b). Evolution of the memory score on clean and noisy training examples.}
  \label{fig:interpolation_memscore}
\end{figure}

In the full-batch setting, the exact flows are
\[
\frac{\mathrm{d}W_{c,d}}{\mathrm{d}t}=-\frac{1}{N_{\rm train}}\sum_i[p_{i,c}-\mathbf{1}_{\{y_i=c\}}]\max(0,U_{d,a_i}+V_{d,b_i})-\lambda W_{c,d},
\]
\[
\frac{\mathrm{d}U_{d,a}}{\mathrm{d}t}=-\frac{1}{N_{\rm train}}\sum_{i:a_i=a}\mathbf{1}_{\{U_{d,a_i}+V_{d,b_i}>0\}}\sum_r[p_{i,r}-\mathbf{1}_{\{y_i=r\}}]W_{r,d}-\lambda U_{d,a},
\]
\[
\frac{\mathrm{d}V_{d,b}}{\mathrm{d}t}=-\frac{1}{N_{\rm train}}\sum_{i:b_i=b}\mathbf{1}_{\{U_{d,a_i}+V_{d,b_i}>0\}}\sum_r[p_{i,r}-\mathbf{1}_{\{y_i=r\}}]W_{r,d}-\lambda V_{d,b}.
\]
These equations show that each example $(X=(a,b),y)$ strengthens its own output coordinate $(y)$ on its active hidden dimensions and suppresses the other output coordinates $(c\neq y)$, with the latter effect smaller by approximately a factor $1/P$. This example-wise interpolation motivates the following memory score:
\[
\operatorname{mem}\mathrel{:=}\sum_{i=1}^{N}\sum_{d=1}^{D}\mathbf{1}_{\{U_{d,a_i}+V_{d,b_i}>0\}}\sum_{c\neq y_i}(W_{y_i,d}-W_{c,d}).
\]
From figure~\ref{fig:interpolation_memscore}(b), we can observe that the memory score undergoes a rapid accumulation process in the early stages of training. Simultaneously, due to the weight decay's effect of reducing the norm, it gradually decreases.

\section{Proofs and Additional Details}\label{app:proofs}

\subsection{Calculation of the gradient flow}\label{app:grad-flow}
\begin{proof}
Let
\[
h_{x,y}:=Ue_x+Ve_y=U^x+V^y,
\qquad
p_{x,y}:=\operatorname{softmax}\bigl(W\sigma(h_{x,y})\bigr).
\]
Without loss of generality, assume that the learning rate $\eta=1$. 
For the softmax cross-entropy loss associated with one sample
\((x,y,z)\),
\[
\ell(x,y,z)
=
-\log (p_{x,y})_z,
\]
the derivative with respect to the logits
\(f(x,y)=W\sigma(h_{x,y})\) is
\[
\nabla_f\ell=p_{x,y}-e_z.
\]
Therefore, by the chain rule,
\[
\nabla_{h_{x,y}}\ell
=
\operatorname{Diag}\bigl(\sigma'(h_{x,y})\bigr)
W^\top(p_{x,y}-e_z).
\]
Since \(U^x\) contributes to \(h_{x,y}\) only when the first position
of the sample is \(x\), the full-batch gradient flow gives
\[
\frac{\mathrm dU^x}{\mathrm dt}
=
r_x\,
\mathbb E_\pi\left[
\sigma'(h_{x,y})
\odot W^\top(e_z-p_{x,y})
\,\middle|\,x
\right].
\tag{1}
\]

Because \(\sigma\) is \(C^3\) near zero and
\[
\sigma(h)
=
c_1h+\frac{c_2}{2}h^2+o(h^2),
\]
where the powers are understood componentwise, we have
\[
\sigma'(h)
=
c_1\mathbf 1+c_2h+o(\|h\|).
\tag{2}
\]
Moreover,
\[
\sigma(h)=c_1h+O(\|h\|^2).
\tag{3}
\]

Let
\[
\Pi_P
:=
I-\frac1P\mathbf 1_P\mathbf 1_P^\top.
\]
The softmax function satisfies
\[
\operatorname{softmax}(0)=\frac1P\mathbf 1_P,
\]
and its Jacobian at zero is
\[
\begin{aligned}
D\operatorname{softmax}(0)
&=
\operatorname{Diag}\left(\frac1P\mathbf 1_P\right)
-
\left(\frac1P\mathbf 1_P\right)
\left(\frac1P\mathbf 1_P\right)^\top
\\
&=
\frac1P I-\frac1{P^2}\mathbf 1_P\mathbf 1_P^\top
\\
&=
\frac1P\Pi_P.
\end{aligned}
\]
Consequently, for a logit vector \(g\) near zero,
\[
\operatorname{softmax}(g)
=
\frac1P\mathbf 1_P+\frac1P\Pi_Pg+O(\|g\|^2).
\tag{4}
\]

Taking \(g=W\sigma(h_{x,y})\) and using (3), we obtain
\[
g
=
c_1Wh_{x,y}+O(\|h_{x,y}\|^2).
\]
Substituting this expression into (4) gives
\[
p_{x,y}
=
\frac1P\mathbf 1_P
+
\frac{c_1}{P}\Pi_PWh_{x,y}
+
O(\|h_{x,y}\|^2).
\tag{5}
\]
Thus,
\[
\begin{aligned}
W^\top(e_z-p_{x,y})
={}&
W^\top e_z-\frac1P W^\top\mathbf 1_P
\\
&-
\frac{c_1}{P}W^\top\Pi_PWh_{x,y}
+
O(\|h_{x,y}\|^2).
\end{aligned}
\tag{6}
\]

Combining (2) and (6), and retaining all constant and first-order
terms in \(h_{x,y}\), yields
\[
\begin{aligned}
&
\sigma'(h_{x,y})
\odot W^\top(e_z-p_{x,y})
\\
={}&
\left(c_1\mathbf 1+c_2h_{x,y}+o(\|h_{x,y}\|)\right)
\\
&\quad\odot
\left(
W^\top e_z-\frac1P W^\top\mathbf 1_P
-\frac{c_1}{P}W^\top\Pi_PWh_{x,y}
+O(\|h_{x,y}\|^2)
\right)
\\
={}&
c_1W^\top e_z
-\frac{c_1}{P}W^\top\mathbf 1_P
\\
&+
c_2h_{x,y}\odot(W^\top e_z)
-\frac{c_2}{P}h_{x,y}\odot(W^\top\mathbf 1_P)
\\
&-
\frac{c_1^2}{P}W^\top\Pi_PWh_{x,y}
+\delta_{x,y},
\end{aligned}
\tag{7}
\]
where \(\delta_{x,y}\) contains all higher-order terms. In particular,
the product
\[
-\frac{c_1c_2}{P}
h_{x,y}\odot\bigl(W^\top\Pi_PWh_{x,y}\bigr)
\]
is quadratic in \(h_{x,y}\) and is therefore included in
\(\delta_{x,y}\).

We now take the conditional expectation of each term in (7). First,
\[
\begin{aligned}
&\mathbb E_\pi\left[
c_1W^\top e_z-\frac{c_1}{P}W^\top\mathbf 1_P
\,\middle|\,x
\right]
\\
&\qquad=
c_1W^\top
\left(
\mathbb E_\pi[e_z\mid x]
-\frac1P\mathbf 1_P
\right).
\end{aligned}
\tag{8}
\]

Since \(h_{x,y}=U^x+V^y\),
\[
\begin{aligned}
h_{x,y}\odot(W^\top e_z)
={}&
U^x\odot(W^\top e_z)
+
V^y\odot(W^\top e_z).
\end{aligned}
\]
Hence,
\[
\begin{aligned}
&\mathbb E_\pi\left[
h_{x,y}\odot(W^\top e_z)
\,\middle|\,x
\right]
\\
&\qquad=
U^x\odot
\left(W^\top\mathbb E_\pi[e_z\mid x]\right)
+
\mathbb E_\pi\left[
V^y\odot(W^\top e_z)
\mid x
\right].
\end{aligned}
\tag{9}
\]

Furthermore,
\[
\mathbb E_\pi[h_{x,y}\mid x]
=
U^x+\mathbb E_\pi[Ve_y\mid x].
\tag{10}
\]
It follows from (10) that
\[
\begin{aligned}
&\mathbb E_\pi\left[
h_{x,y}\odot(W^\top\mathbf 1_P)
\,\middle|\,x
\right]
\\
&\qquad=
\left(
U^x+\mathbb E_\pi[Ve_y\mid x]
\right)
\odot(W^\top\mathbf 1_P),
\end{aligned}
\tag{11}
\]
and
\[
\begin{aligned}
&\mathbb E_\pi\left[
W^\top\Pi_PWh_{x,y}
\,\middle|\,x
\right]
\\
&\qquad=
W^\top\Pi_PW
\left(
U^x+\mathbb E_\pi[Ve_y\mid x]
\right).
\end{aligned}
\tag{12}
\]

Substituting (8)--(12) into the gradient-flow equation (1), we obtain
\[
\begin{aligned}
\frac{\mathrm dU^x}{\mathrm dt}
=
r_x\Bigg\{
&
c_1W^\top
\left(
\mathbb E_\pi[e_z\mid x]
-\frac1P\mathbf 1_P
\right)
\\
&+
c_2\left(
U^x\odot
\left(W^\top\mathbb E_\pi[e_z\mid x]\right)
+
\mathbb E_\pi\left[
V^y\odot(W^\top e_z)
\mid x
\right]
\right)
\\
&-
\frac{c_2}{P}
\left(
U^x+\mathbb E_\pi[Ve_y\mid x]
\right)
\odot(W^\top\mathbf 1_P)
\\
&-
\frac{c_1^2}{P}
W^\top
\left(
I-\frac1P\mathbf 1_P\mathbf 1_P^\top
\right)
W
\left(
U^x+\mathbb E_\pi[Ve_y\mid x]
\right)
\Bigg\}
+\delta,
\end{aligned}
\]
where $\delta=O (\epsilon^3)$. 

The same calculation would be applied to $V$. Then we obtain the gradient flow for $W$.
Denote $r_z$ by the probability of samples labeled $z$. According to the definition, there is
$$
\frac{\mathrm d W_z^\top}{\mathrm dt}
=(e_z-p_{x,y}) \,\sigma(h_{x,y}),
$$

Then, the gradient flow of the \(z\)-th row of \(W\) is expressed in the expected form as

$$
\begin{aligned}
\frac{\mathrm d W_z^\top}{\mathrm dt}
={}&
c_1\left(
r_z\,\mathbb E_\pi[h_{x,y}\mid Z=z]
-\frac1P\mathbb E_\pi[h_{x,y}]
\right)
\\
&+
\frac{c_2}{2}\left(
r_z\,\mathbb E_\pi[
(h_{x,y}\odot h_{x,y})\mid Z=z]
-\frac1P\mathbb E_\pi[
h_{x,y}\odot h_{x,y}]
\right)
\\
&-
\frac{c_1^2}{P}
\mathbb E_\pi[h_{x,y}^\top h_{x,y}]W^\top\Pi_P e_z
+\delta .
\end{aligned}
$$

Substituting $h_{x,y} = U^x + V^y$ yields a similar result.
\end{proof}

\subsection{Proof of Dominance of the gradient}\label{app:dominance}
\begin{proof}
We give the proof for $U$ and the same calculation would be able to applied to $V$ and $W$.
For convenience, write
\[
q_x:=\mathbb E_\pi[e_z\mid x],
\qquad
\bar V_x:=\mathbb E_\pi[Ve_y\mid x],
\]
and define
\[
\Pi_P
:=
I_P-\frac1P\mathbf 1_P\mathbf 1_P^\top.
\]
From proposition \ref{prop:gradient_flow}, we have
\[
\begin{aligned}
\frac{\mathrm dU^x}{\mathrm dt}
=
r_x\Bigg\{
&
c_1W^\top\left(q_x-\frac1P\mathbf 1_P\right)
\\
&+
c_2U^x\odot(W^\top q_x)
\\
&+
c_2\mathbb E_\pi\left[
V^y\odot(W^\top e_z)\mid x
\right]
\\
&-
\frac{c_2}{P}
\left(U^x+\bar V_x\right)
\odot(W^\top\mathbf 1_P)
\\
&-
\frac{c_1^2}{P}
W^\top\Pi_PW
\left(U^x+\bar V_x\right)
\Bigg\}
+\delta.
\end{aligned}
\tag{1}
\]

Since \(z\mid x\) is uniform on $\Omega$,
\[
q_x
=
\mathbb E_\pi[e_z\mid x]
=
\frac1P\mathbf 1_P.
\tag{2}
\]
Therefore,
\[
c_1W^\top
\left(q_x-\frac1P\mathbf 1_P\right)=0.
\tag{3}
\]
and
\[
c_2U^x \odot W^\top
\left(q_x-\frac1P\mathbf 1_P\right)=0.
\tag{4}
\]
The joint product term is
\[
W^\top\Pi_PW
\left(U^x+\bar V_x\right)
=
O(\epsilon^3).
\]

Substitute all these, we have
\[
\frac{\mathrm dU^x}{\mathrm dt}
=
r_xc_2 \bigg[
\mathbb E_\pi\left[
V^y\odot(W^\top e_z)\mid x
\right]- \overline{V} \cdot \overline{W}
\bigg]
+\delta,
\]

It remains to identify the conditional expectation with the stated matrix expression. By definition,
\[
\begin{aligned}
\operatorname{diag}(V\Phi_x^{(a)}W)
&=
\sum_{y,z}
\mathbb P(Z=z,Y=y\mid X=x)\,
\operatorname{diag}\bigl(V^y(e_z^\top W)\bigr)
\\
&=
\sum_{y,z}
\mathbb P(Z=z,Y=y\mid X=x)\,
\left(V^y\odot W^\top e_z\right)
\\
&=
\mathbb E_\pi\left[
V^y\odot(W^\top e_z)\mid x
\right].
\end{aligned}
\]
adding the weight decay term, we conclude that
\[
\frac{\mathrm dU^x}{\mathrm dt}
=-\lambda U^x+
r_xc_2\bigg[\operatorname{diag}
\left(V\Phi_x^{(a)}W\right) - \overline{V} \cdot \overline{W} \bigg]
+\delta.
\]

\end{proof}

\subsection{Proof of Estimation of noise}\label{app:add-dynamics}

\begin{proof}
Let $G_{\mathcal D}$ be the empirical full-batch vector field.  To avoid
random conditional denominators, for every $(x,y)\in\Omega^2$ let $I_{xy}$ be the
indicator that the pair is included in the training set, and let
$\widetilde Z_{xy}$ be its observed label when $I_{xy}=1$.  Uniform sampling
and uniform label replacement give the exact identity
\[
\mathbb E_{\mathcal D}\!\left[
 \frac{I_{xy}}m\mathbf 1_{\{\widetilde Z_{xy}=z\}}
\right]
=\frac1{P^2}\left((1-\alpha)\mathbf 1_{\{z=x+y\}}
                         +\frac{\alpha}{P}\right).
\tag{1}
\]
The expectation in (1) is over the training pairs, the corrupted subset,
and the replacement labels, with the parameters fixed.

We prove the assertion for $U$ first.  Applying the expansion in
Proposition~\ref{prop:gradient_flow} to the empirical measure yields
\[
\begin{aligned}
G^U_{\mathcal D,x}+\lambda U^x
=\frac{\eta}{m}\sum_{y\in\Omega}I_{xy}\Bigg[{}
&c_1W^\top\left(e_{\widetilde Z_{xy}}-\frac1P\mathbf1_P\right)\\
&+c_2(U^x+V^y)\odot
 W^\top\left(e_{\widetilde Z_{xy}}-\frac1P\mathbf1_P\right)
\Bigg]+\Delta^U_{\mathcal D,x},
\end{aligned}
\tag{2}
\]
For equation (1), summed over $y$, shows
that
\[
\mathbb E_{\mathcal D}\!\left[
\frac1m\sum_y I_{xy}
\left(e_{\widetilde Z_{xy}}-\frac1P\mathbf1_P\right)
\right]=0.
\tag{3}
\]
Consequently, both the first-order term and the part of the second-order
term containing $U^x$ have zero sampling mean.  For the remaining part,
(1) gives
\[
\begin{aligned}
&\mathbb E_{\mathcal D}\!\left[
\frac{\eta c_2}{m}\sum_y I_{xy}V^y\odot
W^\top\left(e_{\widetilde Z_{xy}}-\frac1P\mathbf1_P\right)
\right]\\
&\quad=
\frac{\eta c_2(1-\alpha)}{P}
\left[
P^{-1}\sum_y V^y\odot W^\top e_{x+y}
-\overline{V}\odot\overline{W}
\right].
\end{aligned}
\tag{4}
\]
This has already yielded the main components of the theorem.  Exchanging $U$ and $V$ for another proof.

For the $x$-th output row, set
$\Psi_x^+:=\sum_j e_je_{x-j}^\top$.  The same Taylor expansion gives
\[
G^W_{\mathcal D,x}+\lambda W_x
=\frac{\eta}{m}\sum_{a,b}I_{ab}
\left(\mathbf1_{\{\widetilde Z_{ab}=x\}}-\frac1P\right)
\left[c_1h_{ab}+\frac{c_2}{2}(h_{ab}\odot h_{ab})\right]
+\Delta^W_{\mathcal D,x},
\tag{5}
\]
where $h_{ab}=U^a+V^b$.  The linear term in (5) has zero expectation: on every addition diagonal $a+b=x$, both
$\sum_aU^a$ and $\sum_bV^b$ agree with their uniform-background sums.
The same cancellation removes the $U^a\odot U^a$ and
$V^b\odot V^b$ terms.  The two equal cross terms leave
\[
\frac{\eta c_2(1-\alpha)}{P}
\left[P^{-1}
\sum_aU^a\odot V^{x-a}
-\overline{U}\odot \overline{V}\right].
\tag{6}
\]

It remains to control the finite-sample fluctuation.  Define
$\mathcal E_\bullet$ as the centered fluctuation of the displayed
first- and second-order terms, and include the high order remainder.  For simple random
sampling without replacement, the finite-population variance formula gives
a factor $(1-\rho)/(\rho P^3)$ for a fixed input or output coordinate.
Randomly corrupting an $\alpha$ fraction of the selected labels contributes
at most an additional $\alpha/(\rho P^3)$.  In (2) and (5), each coordinate
of a centered first-order summand is bounded by $2|c_1|\eps$, while each
second-order summand is bounded by $C|c_2|\eps^2$.  The variance formula gives
\[
\operatorname{Var}_{\mathcal D}(\mathcal E_\bullet)
\leq
\frac{C\eta^2}{\rho P^3}
\left[(1-\rho)+\alpha\right]\left(c_1^2\eps^2+c_2^2\eps^4\right).
\]
which is the claimed bound.
\end{proof}

It is noted that the error is caused by the $\rho$ and $\alpha$ during the sampling. If the training set contains too few clean samples, the error between  actual gradient and the predicted term will be very large, which is consistent with our intuition.

\subsection{Proof of dominant frequency and phase locking}\label{app:thm-dominant}
\begin{proof}
The Fourier modes obey
\[
\dot U_r=C\overline{V_r}W_r-\lambda U_r,\quad
\dot V_r=C\overline{U_r}W_r-\lambda V_r,\quad
\dot W_r=CU_rV_r-\lambda W_r.
\]
Set $U_r=e^{-\lambda t}X_r(s)$, $V_r=e^{-\lambda t}Y_r(s)$, and
$W_r=e^{-\lambda t}Z_r(s)$, with
\[
s=\frac{C}{\lambda}(1-e^{-\lambda t}).
\]
Then
\[
\frac{\mathrm dX_r}{\mathrm ds}=\overline{Y_r}Z_r,\quad
\frac{\mathrm dY_r}{\mathrm ds}=\overline{X_r}Z_r,\quad
\frac{\mathrm dZ_r}{\mathrm ds}=X_rY_r.
\]
If $s_r^\star<C/\lambda$ is the undamped blow-up time, its damped time is
\[
t_r^\star=-\frac1\lambda\log\left(1-\frac{\lambda s_r^\star}{C}\right).
\]
This map is increasing, so $s_w^\star<s_i^\star$ implies
$t_w^\star<t_i^\star$.  As $t\uparrow t_w^\star$, the energy of mode $w$
diverges while every competing mode remains finite, and its energy fraction
tends to one.

For phase locking, write $U=Ae^{i\alpha}$, $V=Be^{i\beta}$,
$W=Ge^{i\gamma}$, and $\delta=\alpha+\beta-\gamma$.  Direct separation of
real and imaginary parts gives
\[
\dot\delta=-C\sin\delta\left(\frac{BG}{A}+\frac{AG}{B}+\frac{AB}{G}\right).
\]
While the amplitudes remain nonzero, the factor in parentheses is positive.
Thus $\delta=0$ is stable and $\delta=\pi$ is unstable.  A continuous random
initialization avoids the unstable equilibrium with probability one, yielding
$\delta(t)\to0$ modulo $2\pi$ on the regular continuation interval.
\end{proof}

\subsection{Proof of the early-loss theorem}\label{app:thm-early-loss}
\begin{lemma}
Let \(a,b,c,d\) be i.i.d. random variables and let \(f:\mathbb R\to\mathbb R\) be such that the expectations below exist, then
\[
\mathbb E[f(a+b)f(a+c)]\ge \mathbb E[f(a+b)f(c+d)].
\]
\end{lemma}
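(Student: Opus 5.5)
Condition on the shared variable $a$ and introduce
\[
g(t):=\mathbb E[f(t+b)],
\]
where $b$ has the common law. In $\mathbb E[f(a+b)f(a+c)]$ the variables $b$ and $c$ are independent of each other and of $a$. Applying Fubini (or the tower property) conditionally on $a$ therefore gives
\[
\mathbb E[f(a+b)f(a+c)\mid a]=g(a)^2,
\qquad\text{so}\qquad
\mathbb E[f(a+b)f(a+c)]=\mathbb E[g(a)^2].
\]
In $\mathbb E[f(a+b)f(c+d)]$ the pairs $(a,b)$ and $(c,d)$ are independent and identically distributed. The expectation therefore factors as
\[
\mathbb E[f(a+b)]\,\mathbb E[f(c+d)]=\bigl(\mathbb E[f(a+b)]\bigr)^2 .
\]
Conditioning on $a$ once more, this equals $\bigl(\mathbb E[g(a)]\bigr)^2$.

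The inequality then becomes
\[
\mathbb E[g(a)^2]\ge\bigl(\mathbb E[g(a)]\bigr)^2,
\]
which is Jensen's inequality (equivalently $\operatorname{Var}(g(a))\ge 0$). The gap is exactly $\operatorname{Var}_a\bigl(\mathbb E_b f(a+b)\bigr)$. This also shows that equality holds precisely when $g$ is a.s. constant, which explains why $\kappa(1)>\kappa(0)$ holds strictly in generic cases.

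The only point needing care is integrability, which is what "the expectations below exist" must cover. I would interpret the hypothesis as $\mathbb E[f(a+b)^2]<\infty$. Cauchy--Schwarz then makes both products integrable. Conditional Jensen gives $g(a)^2\le \mathbb E[f(a+b)^2\mid a]$, so $g(a)\in L^2$ and Fubini applies. For the vector-valued version $\kappa(1)$ versus $\kappa(0)$ used in Theorem~\ref{thm:early-loss}, the same argument applies coordinatewise to the inner product $\sigma(\alpha+\beta)^\top\sigma(\gamma+\delta)$, since the i.i.d. Gaussian vectors have independent coordinates. Summing the coordinatewise inequalities gives $\kappa(1)\ge\kappa(0)$.
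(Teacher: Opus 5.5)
Your proof is correct and is essentially the paper's argument: you condition on the shared variable $a$, use independence to factor both expectations, and identify the gap as $\operatorname{Var}(g(a))\ge 0$ with $g(a)=\mathbb E[f(a+b)\mid a]$ (the paper's $h$). Your integrability remarks and the coordinatewise extension to $\kappa(1)\ge\kappa(0)$ are sound additions the paper leaves implicit, though your $L^2$ assumption on $f(a+b)$ is stronger than needed: absolute integrability of the two displayed products already forces $f(a+b)\in L^1$ and $\mathbb E[\,|f(a+b)|\mid a\,]\in L^2$, which is all the argument uses.
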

\begin{proof} Define
$h(a)=\mathbb E[f(a+b)\mid a],\qquad \mu=\mathbb E[f(a+b)].$

Since \(a+b\) and \(c+d\) are independent,
\[
\mathbb E[f(a+b)f(c+d)]
=\mathbb E[f(a+b)]\,\mathbb E[f(c+d)]
=\mu^2.
\]
Given \(a\), the variables \(b\) and \(c\) are independent, so
\[
\mathbb E[f(a+b)f(a+c)\mid a]
=\mathbb E[f(a+b)\mid a]\,\mathbb E[f(a+c)\mid a]
=h(a)^2.
\]
Hence $\mathbb E[f(a+b)f(a+c)]=\mathbb E[h(a)^2].$

Therefore,
\[
\mathbb E[f(a+b)f(a+c)]-\mathbb E[f(a+b)f(c+d)]
=\mathbb E[h(a)^2]-\mu^2.
\]
But \(\mu=\mathbb E[h(a)]\), so
\[
\mathbb E[h(a)^2]-\mu^2
=\operatorname{Var}(h(a))\ge 0.
\]
Thus $\mathbb E[f(a+b)f(a+c)]\ge \mathbb E[f(a+b)f(c+d)]$.
\end{proof}

For activation functions such as GELU and RELU, the above inequality will hold strictly. It is $\kappa(1)>\kappa(0)$ in the main text.

\textbf{Theorem \ref{thm:early-loss}.}

\begin{proof}
The total number of all pairs with the same label is
\[
K_0=\sum_z n_z^2,
\]
Let $n^{(a)}_{x,y}$ count examples with first coordinate $X=x$ and label $Z=y$,
and define $n^{(b)}_{z,y}$ analogously.  In the complete table,
$n^{(a)}_{x,y}=P\varphi^{(a)}_{x,y}$ and
$n^{(b)}_{z,y}=P\varphi^{(b)}_{z,y}$.  The ordered pairs sharing the first
coordinate and label number
\[
C_a=\sum_{x,y}n^{(a)}_{x,y}(n^{(a)}_{x,y}-1)
=P^2\|\varphi^{(a)}\|_2^2-P^2,
\]
and similarly $C_b=P^2\|\varphi^{(b)}\|_2^2-P^2$.  

We define
\[
\kappa(l)=\mathbb E[\sigma(h_i)\sigma(h_j)]\cdot D, \quad l=0,1,2,
\]
Here $(h_i,h_j)$ are scalar jointly Gaussian, each of variance $2s^2$ and covariance $ls^2$. $s$ denotes the initialization scale. The definition is equivalent to that in the main text.

Thus, We can count it by using the principle of inclusion-exclusion as:
\[
-\mathbb E\dot L/\eta = \frac{1}{N^2} \left(\kappa(0)K_0+(\kappa(1)-\kappa(0))(C_a+C_b)+(\kappa(2)-\kappa(0)) N\right),
\]
The last item is independent of the training distribution.
\end{proof}

\subsection{Propositions of Hadamard basis}\label{app:hadamard}
Let
\[
G=(\mathbb{Z}/2\mathbb{Z})^6
\]
and identify the group operation on $G$ with bitwise XOR. For each
$a\in G$, define the matrix $\Phi_a\in\mathbb{R}^{G\times G}$ by
\[
\Phi_a(y,z)
=
\frac{1}{64}\mathbf{1}_{\{z=a+y\}},
\qquad y,z\in G.
\]
For $u\in G$, define the Walsh character
\[
\chi_u(v)=(-1)^{u\cdot v},
\]
where
\[
u\cdot v
=
\sum_{j=1}^6 u_jv_j
\pmod 2.
\]
Let $H_u\in\mathbb{R}^{G}$ be the normalized Walsh vector
\[
H_u(v)=\frac{1}{8}\chi_u(v).
\]
Then the family $\{\Phi_a:a\in G\}$ is simultaneously diagonalized by the
orthonormal Walsh--Hadamard basis $\{H_u:u\in G\}$. More precisely,
\[
\Phi_a H_u
=
\frac{\chi_u(a)}{64}H_u
=
\frac{(-1)^{u\cdot a}}{64}H_u.
\]
Consequently, if $H$ is the orthogonal matrix whose columns are the vectors
$H_u$, then
\[
H^\top \Phi_a H
=
\operatorname{diag}\left(
\frac{\chi_u(a)}{64}
\right)_{u\in G}
\]
for every $a\in G$.
Indeed, for
$u,u'\in G$,
\begin{align*}
\langle H_u,H_{u'}\rangle
&=
\frac{1}{64}
\sum_{v\in G}
\chi_u(v)\chi_{u'}(v)\\
&=
\frac{1}{64}
\sum_{v\in G}
(-1)^{(u+u')\cdot v}.
\end{align*}
If $u=u'$, then the summand is identically equal to $1$, and hence
\[
\langle H_u,H_u\rangle=1.
\]
If $u\neq u'$, then $u+u'\neq 0$. Choose an element $v_0\in G$ such that
\[
(u+u')\cdot v_0=1.
\]
The map $v\mapsto v+v_0$ is a bijection of $G$, and therefore
\begin{align*}
\sum_{v\in G}(-1)^{(u+u')\cdot v}
&=
\sum_{v\in G}(-1)^{(u+u')\cdot(v+v_0)}\\
&=
- \sum_{v\in G}(-1)^{(u+u')\cdot v}.
\end{align*}
Thus the sum is zero. Hence
\[
\langle H_u,H_{u'}\rangle
=
\mathbf{1}_{\{u=u'\}},
\]
so $\{H_u:u\in G\}$ is an orthonormal basis of $\mathbb{R}^{G}$.
Now fix $a,u\in G$. For every $y\in G$, we have
\begin{align*}
(\Phi_a H_u)(y)
&=
\sum_{z\in G}\Phi_a(y,z)H_u(z)\\
&=
\frac{1}{64}H_u(a+y).
\end{align*}
Using the character identity
\[
\chi_u(a+y)
=
\chi_u(a)\chi_u(y),
\]
we obtain
\begin{align*}
(\Phi_a H_u)(y)
&=
\frac{1}{64}\cdot\frac{1}{8}
\chi_u(a+y)\\
&=
\frac{1}{64}\cdot\frac{1}{8}
\chi_u(a)\chi_u(y)\\
&=
\frac{\chi_u(a)}{64}H_u(y).
\end{align*}
Since this holds for every $y\in G$,
\[
\Phi_a H_u
=
\frac{\chi_u(a)}{64}H_u.
\]
Thus every Walsh vector $H_u$ is a common eigenvector of every matrix
$\Phi_a$, with eigenvalue
\[
\lambda_u(a)=\frac{\chi_u(a)}{64}
=
\frac{(-1)^{u\cdot a}}{64}.
\]
Let $H$ denote the orthogonal matrix whose $u$-th column is $H_u$. Since
the columns of $H$ form a common orthonormal eigenbasis, we have
\[
H^\top\Phi_a H
=
\operatorname{diag}\left(
\frac{\chi_u(a)}{64}
\right)_{u\in G}.
\]
Therefore, the matrices $\{\Phi_a:a\in G\}$ are simultaneously
orthogonally diagonalized by the normalized Walsh--Hadamard matrix.

\section{Experiment of Various Label Noise}\label{app:early_noise_prediction}

\subsection{Experiment on Modular Multiplication Task}\label{app:multiplication}

\begin{figure}[t]
  \centering
  \includegraphics[width=\linewidth]{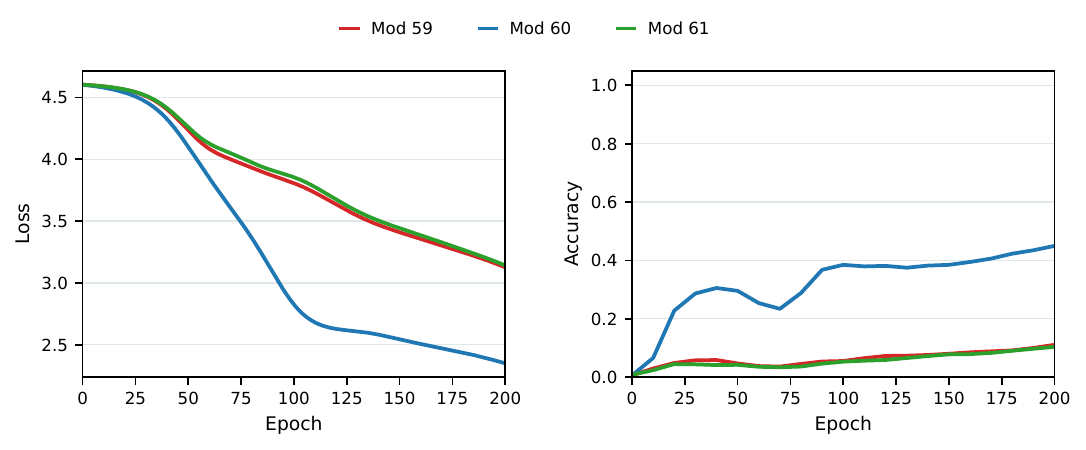}
  \caption{Early-training loss curves of the modular multiplication by $59,60,61$.}
  \label{fig:mul}
\end{figure}

In order to verify our theory, we designed the following experiment on the modular multiplication task.
Consider $N=61,\Omega=\{1,2,\ldots,N\}$ and there are three modular-period that $M_1=59,M_2=60,M_3=61$.
The total pairs of $\Omega\times \Omega$ are divided into three parts $S_1,S_2,S_3$. The labels of data from
\[
S_i=\{(x_i,y_i)\}, \quad i=1,2,3,
\]
are set to
\[
\tilde{z_i}=x_i\cdot y_i \mod M_i, \quad \tilde{z_i}\in \Omega \cup \{0\},
\]
We trained it using the same neural network as before. See Figure~\ref{fig:mul} for the early-training loss curves of the three tasks. We can see that the task with $M_2=60$ showed a significantly faster decline than the others.

Our theory can easily explain this point. For $M=60$, there exist a large number of pairs $(x,y)$ such that $x\cdot y \equiv x' \cdot y \pmod M$, such as
\[
5 \times 15 \equiv 13 \times 15 \pmod {60},
\]
which means that the label $z$ is more concentrated. This leads to larger $K_0=\sum_z n_z^2$ and larger $\|\varphi^{(a)}\|_2^2+\|\varphi^{(b)}\|_2^2$ , thus a faster early training loss decline according to Theorem~\ref{thm:early-loss}. In contrast, for prime numbers $M_1=59$ and $M_3=61$, it satisfies $x\cdot y \equiv x' \cdot y \pmod M$ if and only if $x\equiv x' \pmod M$ for non-zero $y$, which means the signature of $M=59,61$ are more uniformly distributed. This leads to a smaller $K_0$ and $K_1$ (we think that the values of $M_1 = 59$ and $M_3 = 61$ are very close to each other). As shown in Figure ~\ref{fig:mul}, the theoretical prediction is consistent with the actual observation. The early loss slope of $M_2=60$ is much greater than the others.

\subsection{Experiment on the Bijective Noise}\label{app:bijective}
In the main text, we point out that in the scenario of uniform noise on the labels, during the early training phase, it is always observed that the loss of the noise decreases first, as in figure~\ref{fig:early_clean_noise}. 

\begin{figure}[t]
  \centering
  \includegraphics[width=\linewidth]{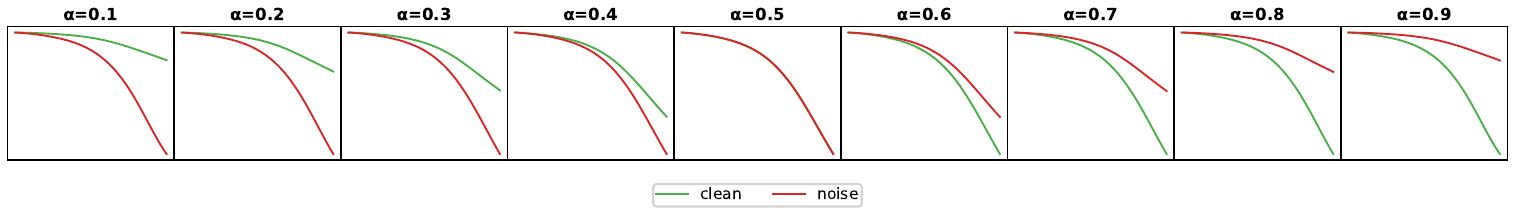}
  \caption{Early-training loss curves of the bijective noise experiment.}
  \label{fig:latin}
\end{figure}

Here we present a specific noise construction that ensures it does not undergo such collisions. Just as our theory predicted, the phenomenon observed after replacing this kind of noise was completely different from what was observed before, as shown in Figure ~\ref{fig:latin}. Specifically, we construct a binary function $g(x,y)$ such that:

\begin{itemize}
    \item $g(x,\cdot)$ is bijective for any fixed $x$ and $g(\cdot,y)$ is bijective for any fixed $y$. 
    \item $g$ is not isomorphic to addition, that is,
    \[
    g(x,y)\neq \sigma_3(\sigma_1(x)+\sigma_2(y)),
    \]
    where $\sigma_i$ is from the permutation group of $\Omega$.
    \item Generally, $g$ cannot be accurately fitted by our 2L-MLP.

\end{itemize}

Our final construction is
\[
g(x,y)=\begin{cases}
  x,& x=y, \\
  (x+(4+\left( \frac{y-x}{113} \right))\cdot (y-x)) \mod 113, & x\neq y,
\end{cases}
\]
where $\left( \frac{\cdot}{113} \right)$ is the Quadratic Residue Legendre symbol (of modular 113). It satisfies the above three properties. Next, we replace the original uniform random noise with the function $g(x,y)$. In this case, the loss of $g$ does not always decrease first during the early training phase. By calculating the coefficient preceding $\kappa(1)$, We can give
\[
\Delta_{\text{clean}}-\Delta_{\text{g}}=2\alpha-1,
\]
Inferring like that in chapter \ref{chapter:puzzle}, a larger $\Delta$ gives a more negative early loss derivative. This indicates that when the noise ratio $\alpha < 0.5$, the noise data ($g(x,y)$) decrease first. In contrast, when $\alpha > 0.5$, the opposite occurs. As shown in Figure ~\ref{fig:latin}, the theoretical prediction is consistent with the actual observation.

\end{document}